\documentclass{article}

\usepackage{arxiv}

\usepackage{microtype}
\usepackage{graphicx}
\usepackage{subcaption}
\usepackage{booktabs}
\usepackage{hyperref}
\usepackage{amsmath}
\usepackage{amssymb}
\usepackage{mathtools}
\usepackage{amsthm}
\usepackage[capitalize,noabbrev]{cleveref}
\usepackage{natbib}
\usepackage{tikz}
\usepackage{tikz-cd}
\usepackage{pgfplots}
\usepackage{xcolor}
\usepackage{enumitem}
\RequirePackage{doi}
\usetikzlibrary{arrows.meta,calc,positioning,decorations.pathreplacing,fit}
\pgfplotsset{compat=1.18}

\hypersetup{
  pdftitle={\textsc{Odyssey}: Constructing Verifiable, Local Truth-Preserving Foundation Models},
  pdfsubject={Inspectable foundation model construction},
  pdfauthor={Anonymous Authors},
  pdfkeywords={foundation models, Toulmin argumentation, sheaves, topos world models, foundries},
  colorlinks=true,
  linkcolor=blue!55!black,
  citecolor=blue!55!black,
  urlcolor=blue!55!black
}

\theoremstyle{plain}
\newtheorem{theorem}{Theorem}[section]
\newtheorem{proposition}[theorem]{Proposition}

\newtheorem{corollary}[theorem]{Corollary}
\theoremstyle{definition}
\newtheorem{definition}[theorem]{Definition}

\theoremstyle{remark}

\newcommand{\C}{\mathcal{C}}

\newcommand{\Psh}{\mathcal{P}}

\newcommand{\doop}{\operatorname{do}}

\title{\textsc{Odyssey}: Constructing Verifiable, Local Truth-Preserving Foundation Models  \thanks{Paper to be presented as a 2.5 hour tutorial at ICML 2026, July 6, Seoul, South Korea.}}

\author{ Sridhar Mahadevan \\
	Adobe Research and University of Massachusetts, Amherst\\
	\texttt{smahadev@adobe.com, mahadeva@umass.edu}
}

\date{}

\begin{document}
\maketitle

\begin{abstract}
We introduce a categorical framework for constructing verifiable,
local truth-preserving foundation models as compositions of \emph{foundries}:
building-block architectural components that specify a cover of local
contexts, local representation families, restriction maps, gluing rules,
obstruction policies, update obligations, and human-facing views.  The framework is
implemented using a system called \textsc{Odyssey}, where a foundry is not just
an organized sheaf of knowledge, but one that carries within it an
argumentation component for answering not only ``what does the foundry
contain?'' but also ``why is this claim warranted here, and where would the
argument fail?''  Concrete foundries are built from generic foundries
such as evidence/argument,
operational decision, institutional/financial, market meaning, scientific
challenge, research-program, assistant-build, and evaluation-harness foundries.
The resulting models are sheaf-like families of local predictive and logical
models whose consistency, provenance, promotion decisions, and failures to glue
remain explicit artifacts.  Foundry management is assigned to five agents:
\textsc{Scylla} for human-facing inspection, \textsc{Homer} for source
ingestion, \textsc{Athena} for construction and audit, and
\textsc{Prometheus} for causal and predictive state, while \textsc{Toulmin}
supplies the argumentative reasoning layer.  Foundry SQL (FSQL) provides a
typed query surface for maintained artifacts.  To turn external pretrained
models, including GPT-style models, into durable \textsc{Odyssey} foundries, we
introduce \textsc{Ticket}, Topos Integration using Causal Kan Extension
Transformers.  The ICML tutorial version also integrates BRIDGE/SKFM
causal-geometry screens for latent causal refinement
\citep{mahadevan2026latentconfoundedcausaldiscovery}, infinitesimal-causality
diagnostics for local causal variation
\citep{mahadevan2026infinitesimalcausality}, and SkillOpt optimization of
natural-language admission policies for causal claim tickets
\citep{yang2026skilloptexecutivestrategyselfevolving}.  We
formalize our specific implementation of foundry
construction and management as Universal Foundry Learning (UFL): the universal
property is expressed as a composition of left and right Kan extensions, with
left Kan extension rolling local artifacts into candidate foundries and right
Kan extension enforcing the restriction, gluing, obstruction, and argumentation
conditions required for promotion.
\textsc{Odyssey} is fully implemented and tested across a wide spectrum of concrete
foundries, showing that the same categorical machinery supports domain
construction, artifact replay, sheaf diagnostics, grounded Toulmin/local-LLM
scrutiny, BRIDGE/SKFM residual-obstruction ledgers, and optimized
TICKET-compatible causal-claim extraction across heterogeneous sources. This paper is to be presented as a 2.5 hour tutorial at ICML 2026. The tutorial home page is at \url{https://bit.ly/4ajS0nA}.   \end{abstract}

\keywords{Foundation Models \and Large Language Models \and Toulmin Argumentation \and Sheaves \and Topos Theory}

\section{Introduction}

Large language models are useful general-purpose interfaces, but their dominant
form of reuse is still a poor fit for many scientific, industrial, and
operational settings.  A user rarely wants only a larger embedding or a flat
summary.  They want a foundation model for a domain that can be designed,
inspected, adapted, refreshed, and trusted: a model of a retailer, a research
program, a corpus benchmark, a company filing workflow, a repair-manual
procedure, or a scientific controversy.  Such a model must preserve local
evidence, expose uncertainty, remember provenance, and say when a claim should
not be transported from one regime to another.

\textsc{Odyssey} is a framework for constructing such foundation models as
\emph{foundries}.  A foundry is a reusable model-making architecture: it
specifies a cover of local contexts, local representation families,
restriction maps, gluing rules, obstruction policies, update obligations, and
human-facing views.  Concrete foundries are built from generic foundries such
as evidence/argument, operational decision, institutional/financial, market
meaning, scientific challenge, research-program, assistant-build, and
evaluation-harness foundries.  Specialized foundries are typed compositions of
these generic objects, for example a sporting-goods retailer foundry that
combines review evidence, store operations, brand meaning, and corporate filing
evidence.

\begin{figure}[t]
\centering
\footnotesize
\begin{tikzpicture}[
  node distance=7mm,
  basis/.style={draw,rounded corners,align=center,text width=0.25\linewidth,inner sep=5pt},
  model/.style={draw,rounded corners,align=center,text width=0.78\linewidth,inner sep=6pt,fill=black!3},
  arrow/.style={-Latex,thick}
]
\node[basis] (evidence) {Evidence / argument};
\node[basis,right=of evidence] (ops) {Operational decision};
\node[basis,right=of ops] (market) {Market meaning};
\node[basis,below=of evidence] (finance) {Institutional / financial};
\node[basis,right=of finance] (science) {Scientific challenge};
\node[basis,right=of science] (eval) {Evaluation harness};
\node[model,below=12mm of science] (space) {Free foundry space: specialized foundation models are typed combinations and restrictions of generic foundries};
\node[model,below=of space] (examples) {Examples: DKS = retail/brand/review/filing; KET = research/evaluation over PTB, WikiText-2, WikiText-103; Amazon Reviews 2023 = corpus-benchmark; MyFixIt = procedural repair PSR; IKEA ASM = multimodal assembly PSR};
\draw[arrow] (evidence) -- (space);
\draw[arrow] (ops) -- (space);
\draw[arrow] (market) -- (space);
\draw[arrow] (finance) -- (space);
\draw[arrow] (science) -- (space);
\draw[arrow] (eval) -- (space);
\draw[arrow] (space) -- (examples);
\end{tikzpicture}
\caption{Foundries as reusable building blocks for foundation models.  The
``basis vector'' language is an algebraic analogy: generic foundries span a
free space of typed model-making objects, and concrete foundation models are
compositions, restrictions, and gluing decisions over those objects.}
\label{fig:foundry-basis}
\end{figure}

The central design principle is:
\[
  \text{representation} = \text{covering} + \text{gluing}.
\]
Documents are not reduced to one vector; they become overlapping regions with
local logics.  Brands are not reduced to personas; they become overlapping
customer, product, promise, and channel contexts.  Companies are not reduced to
summaries; they become overlapping financial, risk, narrative, and market
regions.  Agents are not reduced to policy vectors; they become overlapping
state, action, goal, tool, and evaluation contexts.  The system is engineered
so that agreement and non-agreement are both durable artifacts.

\textsc{Odyssey} is implemented as a five-agent stack.  \textsc{Scylla} translates a
human request into a model-design brief and explains what the resulting foundry
can responsibly answer.  \textsc{Homer} turns the brief into an executable
workflow skeleton.  \textsc{Athena} assigns the representational semantics:
covers, local models, truth values, restrictions, gluing rules, obstruction
policies, and cross-sheaf bridges.  \textsc{Prometheus} is the engine-room
operator: it instantiates the plan as a Topos World Model, evaluates local
predicates, audits gluing, emits obstruction reports, and builds Scylla-facing
dashboards and durable JSON artifacts.  \textsc{Toulmin} is the argumentation
agent: it turns maintained foundry state into warranted claims, explicit
grounds, backing, qualifiers, rebuttals, and context-sensitive justification.

\paragraph{Contributions.}
This paper makes six contributions.
\begin{enumerate}[leftmargin=*]
  \item We introduce \textsc{Odyssey}, a foundry architecture for building
  inspectable foundation models as sheaf-like families of local predictive and
  logical models rather than as single opaque embeddings.
  \item We specify the implementation contract among
  \textsc{Scylla}, \textsc{Homer}, \textsc{Athena}, \textsc{Prometheus}, and
  \textsc{Toulmin},
  including the durable artifacts exchanged at each boundary.
  \item We describe a foundry algebra in which generic foundries can be
  composed, specialized, restricted, glued, lifted, and audited to produce
  domain-bound foundry instances.
  \item We introduce Foundry SQL (FSQL), a small typed query surface for
  slicing maintained foundry artifacts, and TICKET, Topos Integration using
  Causal Kan Extension Transformers, for admitting external or pre-built
  Prometheus models into durable \textsc{Odyssey} state.
  \item We formalize the local predictive-state and sheaf-theoretic core:
  contexts, covers, local sections, finite truth values, restrictions, gluing
  diagnostics, obstruction records, and promotion gates.
  \item We document the current foundry families and examples implemented in
  the repository, including storefront, brand, corporation, Dick's Sporting
  Goods, Amazon Reviews 2023, MyFixIt, Indus Script, TCC 44K, research-program,
  assistant-build, embedding-evaluation, and grounded Toulmin/local-LLM
  comparison foundries.
\end{enumerate}

\section{Related Work}
\label{sec:related}

\textsc{Odyssey} is primarily a proposal about \emph{foundation-model construction}: how
a durable domain model is specified, assembled, inspected, repaired,
transported, and maintained after the initial sources have been collected.
This places the system at the
intersection of foundation-model engineering, data-centric model development,
domain-specific foundation models, agentic orchestration, causal and
argumentative NLP, and sheaf-like consistency management.

The central departure is the unit of construction.  Much foundation-model work
describes a model as a checkpoint, service, dataset, benchmark score, or
application pipeline.  \textsc{Odyssey} instead treats a model-building effort as a
\emph{foundry}: a typed artifact family with a source surface, local contexts,
representation families, restriction maps, gluing rules, obstruction ledgers,
promotion gates, refresh obligations, and human-facing views.  The five \textsc{Odyssey}
agents decompose this construction problem.  \textsc{Scylla} fixes the
human-facing contract, \textsc{Homer} makes the workflow executable,
\textsc{Athena} supplies the representation and gluing laws,
\textsc{Prometheus} materializes the local world-model artifacts, and
\textsc{Toulmin} turns maintained state into warranted, qualified, rebuttable
claims.  Related work is therefore best read through this construction stack:
which parts of the lifecycle does each prior approach make explicit, and which
parts remain implicit in model state, prompt context, or ad-hoc application
code?

\paragraph{Construction of foundation models.}
The term \emph{foundation model} was introduced to name the emerging class of
models trained on broad data at scale and adapted across many downstream tasks
\citep{bommasani2021foundationModels}.  Much of the subsequent literature
treats construction as an engineering pipeline: assemble data, train at scale,
serve efficiently, adapt to downstream tasks, and evaluate risks.  Surveys of
training and serving systems emphasize the computational, memory, bandwidth,
parallelism, and deployment constraints that make foundation-model development
a systems problem as much as a modeling problem
\citep{zhou2024trainingServingFoundationModels}.  Data-centric work argues
that model construction also depends on curation, attribution, benchmark
design, knowledge transfer, and inference-time context, all of which are
underrepresented if foundation models are described only by architecture and
parameter count \citep{xu2024dataCentricLLMs}.

Recent work on domain-specific foundation models makes the construction
problem more explicit: the goal is not merely to fine-tune a general model, but
to customize data, objectives, architecture, adaptation strategy, and evaluation
to the structure of a particular industry or scientific domain
\citep{chen2024domainSpecificFoundationModels}.  \textsc{Odyssey} agrees with this
domain-specific turn but shifts the unit of construction.  A foundry is not
only a trained model checkpoint.  It is a typed recipe for building a durable
domain model: source surfaces, local contexts, representation families,
restriction maps, gluing rules, obstruction policies, update contracts, and
human-facing views.

There is also a growing engineering literature around how foundation models are
released, observed, guided, and embedded into agentic systems.  The Model
Openness Framework argues that reproducibility and usability require releasing
components across the model development lifecycle, not only weights
\citep{white2024modelOpennessFramework}.  Work on foundation-model ``sherpas''
studies ways that agents can guide models through knowledge augmentation,
prompting, reasoning support, updating, and output evaluation
\citep{bhattacharjya2024foundationModelSherpas}.  Agent design-pattern
catalogues similarly treat foundation-model applications as architectures with
memory, planning, tool use, reflection, and accountability tradeoffs
\citep{liu2024agentDesignPatternCatalogue}.  \textsc{Odyssey} is closest to this
architectural view, but its emphasis is representational and argumentative
rather than only procedural: Scylla, Homer, Athena, Prometheus, and Toulmin
make the domain model itself an inspectable sheaf-like artifact before it is
used by a human or agent.

\paragraph{Causal relation extraction from text.}
There is a long line of work on identifying causal relations in natural
language, from cue-phrase and pattern-based systems to neural classifiers; see
surveys of causal relation extraction and event causality identification
\citep{yang2022causalSurvey,cheng2025eventCausalitySurvey}.  Classical systems
usually predict whether a pair of spans or events in a sentence stands in a
causal relation, while corpus-scale work connects such local predictions to
event forecasting or explanatory retrieval \citep{radinsky2012learningCausality}.
\textsc{Prometheus} uses such extracted relations as evidence units, but the
paper's object of study is downstream: how thousands of local claims should be
localized, compared, transported, or blocked across corpus regions.

\paragraph{Causal knowledge bases and graphs from corpora.}
Causal knowledge-base projects mine cause--effect tuples from large corpora and
aggregate them into graph-structured resources
\citep{hassanzadeh2020causalKB}.  This graph-building perspective is close to
the first \textsc{Democritus} contribution, where LLM-generated causal
statements are compiled into local causal models and larger causal atlases
\citep{mahadevan2025largecausalmodelslarge}.  \textsc{Prometheus} extends that line by
treating local graphs and cSQL rows as observations for local causal PSRs.  The
global object is therefore not one merged graph but a sheaf-like family of
charts whose overlaps reveal agreement, drift, contradiction, and
underdetermination.

\paragraph{LLMs for causal discovery and reasoning.}
A growing literature asks whether LLMs can propose causal directions, graph
structures, interventions, or counterfactual explanations from variable
descriptions and textual context
\citep{kiciman2024causalLLM,le2024multiagentCausalDiscovery}.
\textsc{Prometheus} is deliberately more conservative.  It does not treat the
LLM as an oracle for ground-truth causal discovery.  Instead, the LLM helps
surface causal discourse: claims, mechanisms, modifiers, regimes, and source
passages that can be normalized, audited, and compared.  Local intervention
probes in the atlas are therefore model-internal research tests unless paired
with external data and identification assumptions.

\paragraph{Agentic systems for automated scientific discovery.}
Recent systems also aim to automate larger portions of the scientific workflow.
The AI Scientist-v2, for example, uses agentic tree search to propose
hypotheses, design and execute machine-learning experiments, analyze and
visualize results, and write scientific manuscripts
\citep{yamada2025aiScientistV2}.  This line of work is close in ambition to
\textsc{Prometheus}: both ask how AI systems can participate in scientific
discovery rather than merely answer questions about existing papers.  The
emphasis is different.  AI Scientist-v2 organizes autonomous experimentation
and manuscript generation, primarily in machine-learning research settings.
\textsc{Prometheus} instead constructs an explicit causal topos world model
from heterogeneous research artifacts---text, data, figures, source code, and
scientific models---so that local claims, gluing failures, evidentiary limits,
and grounded counterfactual revisions remain inspectable.  In this sense,
\textsc{Prometheus} can be viewed as a complementary world-model layer for
scientific agents: it records what a research substrate supports, where it
does not glue, and which counterfactuals can actually be evaluated.

\paragraph{Prometheus as prior work.}
The Prometheus paper introduced Topos World Models for deep causal research:
language-derived causal episodes, local causal PSRs, restriction maps, gluing
diagnostics, persistent causal atlases, and grounded counterfactual rebuilding
over scientific substrates \citep{mahadevan2026prometheus}.  We do not repeat
that contribution here.  \textsc{Odyssey} treats Prometheus as one engine-room component
inside a broader foundation-model foundry stack.  The new questions in this
paper are architectural and empirical: how human requests become typed foundry
expressions; how Scylla, Homer, Athena, Prometheus, and Toulmin exchange
artifacts; what families of foundries have been built; and whether the first
procedural PSR foundry, MyFixIt, yields useful preliminary retrieval behavior.

\paragraph{Causality-aware NLP.}
More broadly, causal ideas have been used to study text effects,
counterfactual augmentation, representation robustness, and explanations for
NLP systems \citep{feder2022causalNLP}.  \textsc{Prometheus} points in the
opposite direction: it uses NLP and LLM extraction to construct explicit causal
artifacts for human research.  The Claims Atlas is meant to be inspected,
corrected, extended, and rerun, so provenance and gluing failures are part of
the output rather than post-hoc debugging aids.

\section{\textsc{Odyssey} System Architecture}
\label{sec:odyssey-architecture}

\textsc{Odyssey} separates foundation-model construction into five roles with explicit
artifact boundaries.  The point of this separation is not only software
modularity.  It is also epistemic discipline: a human request, an executable
workflow, a representational semantics, an instantiated world model, and an
argumentative justification layer should not be confused with one another.

\begin{table}[t]
\centering
\small
\begin{tabular}{p{0.18\linewidth}p{0.33\linewidth}p{0.38\linewidth}}
\toprule
Module & Responsibility & Current artifact boundary \\
\midrule
\textsc{Scylla} & Human-facing model brief and answer contract. &
\texttt{scylla\_model\_brief.json}: user goal, entities, decisions,
uncertainties, trust questions, and expected outputs. \\
\textsc{Homer} & Workflow orchestration plan. &
\texttt{model.homer.json}: ordered execution steps, refresh obligations,
tool/API surfaces, and required audits. \\
\textsc{Athena} & Sheaf typing and representation semantics. &
\texttt{athena\_sheaf\_plan.json}: cover, local representation types, finite
truth values, overlaps, gluing rule, source sheaves, and cross-sheaf bridges. \\
\textsc{Prometheus} & World-model instantiation and audit. &
\texttt{prometheus\_world\_model.json}, \texttt{gluing\_audit.json}, HTML
explorers, Scylla dashboards, ingestion packets, and callback bundles. \\
\textsc{Toulmin} & Argument construction, critique, and rebuttal. &
\texttt{argument\_ticket.json}: claims, grounds, warrants, backing,
qualifiers, rebuttals, applicable contexts, and obstruction checks. \\
\bottomrule
\end{tabular}
\caption{The implemented \textsc{Odyssey} handoff contract.  Each module emits a durable
object that can be inspected independently and reused by later stages.}
\label{tab:odyssey-modules}
\end{table}

The current implementation is deliberately deterministic in several places.
For example, Scylla uses domain templates to compile broad requests into model
briefs, and Prometheus validation runs evaluate finite truth predicates using
stable rules.  This is a feature of the present design stage: it lets us test
the foundry contract, artifact schemas, and user-facing surfaces before
replacing individual components with richer parsers, learned classifiers,
retrieval systems, or external model calls.

\paragraph{Scylla: request to brief.}
Scylla names the user's request in human terms.  A brief records the domain,
goal, entities, decisions, uncertainties, trust questions, and output contract.
For a corporation foundry, for example, Scylla identifies financial statements,
risk factors, management narrative, strategy signals, and market context as
objects of concern.  For an Indus Script foundry, it separates symbols,
inscription sequences, statistical structure, decipherment hypotheses,
archaeological context, and uncertainty.  This prevents the system from
answering as if every domain were a generic retrieval task.

\paragraph{Homer: brief to workflow.}
Homer compiles the brief into an executable program skeleton: collect sources,
request an Athena plan, instantiate the Prometheus model, run gluing audits,
emit dashboards, and schedule refresh obligations.  In process foundries such
as research-program, assistant-build, or evaluation-harness foundries, Homer
also records maintenance cadence, policy gates, simulation surfaces, and
artifact rebuild obligations.

\paragraph{Athena: workflow to sheaf plan.}
Athena owns the representational semantics.  It chooses the local contexts that
form the cover, assigns each context a local representation type, lists
predicates that can be evaluated inside the context, declares overlaps, and
defines gluing logic.  The current truth-value strategy uses the finite
partition
\[
  \bot,\ \mathrm{WEAK},\ \mathrm{PLAUSIBLE},\ \mathrm{SUPPORTED},\ \top,
\]
translated for users as unsupported, weak signal, plausible, well supported,
and directly confirmed.  An overlap glues when shared predicates are at least
plausible and no paired predicate is bottom; otherwise Athena asks Prometheus
to emit an obstruction record and a next-observation recommendation.

\paragraph{Prometheus: plan to Topos World Model.}
Prometheus instantiates Athena's plan as a compact Topos World Model.  It emits
local sections, restriction maps, gluing results, integration layers, update
status, evidence sources, and HTML views.  For domains with source sheaves,
Prometheus also builds auxiliary sheaves such as the Dick's Sporting Goods
shopping-experience sheaf, the Amazon Reviews 2023 corpus sheaf, the Indus
evidence sheaf, and procedural PSR candidates over MyFixIt repair manuals.
Prometheus is therefore the engine room of \textsc{Odyssey}, but not the entire system:
it runs after Scylla, Homer, and Athena have fixed the intent, workflow, and
representation contract.

\paragraph{Toulmin: foundry state to argument.}
Toulmin is the argumentation agent layered over maintained foundry state.  It
does not replace Prometheus; Prometheus remembers and audits local sections,
while Toulmin turns those sections into defensible arguments.  A Toulmin ticket
records a claim, its data or grounds, the warrant licensing the inference,
backing for that warrant, qualifiers describing the force of the claim, rebuttals
that restrict where it applies, and the contexts over which the argument can be
transported.  This makes the evidence/argument foundry an active structure:
\textsc{Odyssey} can answer not only ``what does the foundry contain?'' but also ``why is
this claim warranted here, and where would the argument fail?''

\section{Foundry Algebra}
\label{sec:foundry-algebra}

\textsc{Odyssey} is organized around a small algebra of foundries.  A generic foundry is
a reusable representational or process architecture.  Representational
foundries include Toulmin-style evidence/argument, operational decision,
institutional/financial, market meaning, scientific challenge, document
coherence, database-atlas incorporation, and Prometheus-model incorporation
foundries.  Process foundries include research-program, assistant-build,
evaluation-harness, product-development, codebase-evolution, and
result-communication foundries.

A specialized foundry is a typed expression built from generic foundries.  In
the current demos:
\[
\begin{array}{l}
\mathrm{DKS} =
\mathrm{specialize}(
\mathrm{compose}(\mathrm{storefront\_operations},
\mathrm{brand\_meaning},\\
\qquad
\mathrm{corporation\_financial},
\mathrm{review\_evidence}),
\mathrm{domain}=\text{``Dick's Sporting Goods''}),\\[2mm]
\mathrm{Indus} =
\mathrm{specialize}(
\mathrm{compose}(\mathrm{scientific\_challenge},
\mathrm{evidence\_argument},\\
\qquad
\mathrm{symbol\_sequence\_model},
\mathrm{hypothesis\_lattice},
\mathrm{archaeology\_context}),
\mathrm{domain}=\text{``Indus Script''}).
\end{array}
\]
The formal operations include composition, specialization, restriction,
gluing, product, coproduct, quotient, and lift.  In implementation terms, these
operations determine which local sections must exist, which overlaps must be
checked, which source sheaves may be bridged, and which artifacts become
durable state.

This algebra gives \textsc{Odyssey} a discipline for growth.  New demos should not enter
the system as unrelated scripts.  They should declare the generic foundries
they instantiate, the target domain, the local contexts, the restriction maps,
and the promotion gate that decides when an output can become maintained
foundry state.

\section{Foundry SQL and TICKET Admission}
\label{sec:fsql-ticket}

\textsc{Odyssey} uses Foundry SQL (FSQL) as a compact typed query surface over foundry
artifacts.  Standard SQL queries build typed constructions over tables; FSQL
generalizes the same idea from rows and joins to foundry sections, source
sheaves, restriction maps, gluing checks, and promotion gates.  In this
analogy, tables become local charts and artifact stores, rows become claims or
model states, foreign keys become shared identifiers and provenance links,
joins become gluing checks, views become Scylla-facing dashboards, and
constraints become Athena and Prometheus audits.

\subsection{Universal Foundry Learners}
\label{sec:ufl}

The categorical template behind the admission story is the Universal Decision
Learner (UDL) pattern from \citet{mahadevanCatAGIBook}.  UDL characterizes a
decision rule by its universal property rather than by one numerical algorithm.

\begin{definition}[Universal Decision Learner]
\label{def:udl}
Given local decision data \(J:\mathcal{O}\to\mathcal{C}\) and
\(D:\mathcal{O}\to\mathcal{D}\), a Universal Decision Learner is the composite
Kan-extension semantics
\[
  \mathrm{UDL}_{J}(D)
  =
  \mathrm{Ran}_{J}(\mathrm{Lan}_{J}D),
\]
whenever the displayed Kan extensions exist, with the evident restriction or
precomposition steps understood when needed for typing.  More generally, UDL
denotes any decision semantics obtained by composing left and right Kan
extensions along problem-specific inclusions of local context into global
context.
\end{definition}

Operationally, the construction has two stages:
\[
  \text{local decision data}
  \xrightarrow{\ \mathrm{Lan}\ }
  \text{rolled-out candidates}
  \xrightarrow{\ \mathrm{Ran}\ }
  \text{globally consistent decisions}.
\]
The left Kan stage expands, aggregates, or propagates information.  The right
Kan stage filters, glues, or enforces consistency.  In reward-enriched settings,
for example over the max-plus semiring, the left Kan stage recovers the familiar
dynamic-programming rollout
\[
  (\mathrm{Lan}_{J}D)(c)
  =
  \max_{Jd\to c}\bigl(D(d)+w(d\to c)\bigr),
\]
while the right Kan stage imposes the tightest downstream inequalities induced
by all continuations.  In deterministic one-step planning this specializes to
the Bellman recurrence
\[
  V(s)=\max_a\{r(s,a)+V(T(s,a))\}.
\]

\begin{definition}[Universal Foundry Learner]
\label{def:ufl}
Let \(x\) be a candidate source artifact, \(Y\) a target foundry, and
\[
  F_{x,Y}:\C_x\to\C_Y
\]
the declared source-to-target interface functor.  If
\(X:\C_x^{op}\to\mathsf{Data}\) is the candidate presheaf, define its admitted
target-side rollout by
\[
  A=\mathrm{Lan}_{F_{x,Y}}X.
\]
The Universal Foundry Learner associated with this TICKET interface is the
right-Kan consistency envelope
\[
  \mathrm{UFL}_{x,Y}(X)
  =
  \mathrm{Ran}_{F_{x,Y}}F_{x,Y}^{*}A.
\]
Thus UFL is the foundry-construction specialization of UDL: left Kan admission
constructs the least target-side foundry candidate, and right Kan consistency
collects the target obligations visible through the same interface.
\end{definition}

\begin{theorem}[Canonical foundry rollout]
\label{thm:ufl-rollout}
Let \(X:\C_x^{op}\to\mathsf{Data}\) and \(F_{x,Y}:\C_x\to\C_Y\) be as above.
If \(A=\mathrm{Lan}_{F_{x,Y}}X\) exists, then for every target foundry model
\(G:\C_Y^{op}\to\mathsf{Data}\) there is a natural bijection
\[
  \mathrm{Nat}(A,G)
  \cong
  \mathrm{Nat}(X,F_{x,Y}^{*}G).
\]
Hence \(A\) is initial among target-side foundry candidates receiving the local
source artifact \(X\).
\end{theorem}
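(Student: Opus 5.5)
The plan is to read the theorem as the defining universal property of a left Kan extension, transported to presheaves. The one subtlety is variance. \(X\) and \(G\) are contravariant, so they are functors out of \(\C_x^{op}\) and \(\C_Y^{op}\). The relevant Kan extension is therefore taken along \(F_{x,Y}^{op}:\C_x^{op}\to\C_Y^{op}\), and \(F_{x,Y}^{*}G = G\circ F_{x,Y}^{op}\) is just precomposition. I will state this convention explicitly at the start. It is what the paper's \(\mathrm{Lan}_{F_{x,Y}}X\) must mean for the typing in Definition~\ref{def:ufl} to make sense.

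\textbf{Step 1: the unit.} By definition, the existence of \(A=\mathrm{Lan}_{F_{x,Y}}X\) means there is a pair \((A,\eta)\) with \(\eta:X\Rightarrow F_{x,Y}^{*}A\). This pair is universal: for every \(G:\C_Y^{op}\to\mathsf{Data}\) and every \(\beta:X\Rightarrow F_{x,Y}^{*}G\), there is a unique \(\alpha:A\Rightarrow G\) with \((F_{x,Y}^{*}\alpha)\circ\eta=\beta\).

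\textbf{Step 2: the bijection.} Define
\[
\Phi_G:\mathrm{Nat}(A,G)\to\mathrm{Nat}(X,F_{x,Y}^{*}G),\qquad \alpha\mapsto (F_{x,Y}^{*}\alpha)\circ\eta .
\]
Existence in the universal property gives surjectivity of \(\Phi_G\), and uniqueness gives injectivity. This uses only the existence of the single extension \(A\), not a global adjoint to \(F_{x,Y}^{*}\). That is why the theorem is stated pointwise in \(G\).

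\textbf{Step 3: naturality in \(G\).} For \(\gamma:G\Rightarrow G'\) I will check that \(\Phi_{G'}(\gamma\circ\alpha)=F_{x,Y}^{*}(\gamma)\circ\Phi_G(\alpha)\). This follows from functoriality of precomposition, \(F_{x,Y}^{*}(\gamma\circ\alpha)=F_{x,Y}^{*}\gamma\circ F_{x,Y}^{*}\alpha\), together with associativity of vertical composition.

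\textbf{Step 4: initiality.} I will make "target-side foundry candidates receiving \(X\)" precise as the comma category \((X\downarrow F_{x,Y}^{*})\). Its objects are pairs \((G,\beta)\) with \(\beta:X\Rightarrow F_{x,Y}^{*}G\), and a morphism \((G,\beta)\to(G',\beta')\) is a \(\gamma:G\Rightarrow G'\) with \(F_{x,Y}^{*}\gamma\circ\beta=\beta'\). Steps 2 and 3 then say exactly that \((A,\eta)\) is an initial object of this category: \(\Phi_G^{-1}(\beta)\) is the unique morphism to \((G,\beta)\).

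The main obstacle is not any computation but fixing these conventions. There are two things to settle: the op-direction of the Kan extension, and the reading of "initial among candidates" as initiality in \((X\downarrow F_{x,Y}^{*})\) rather than in the bare functor category. Once those are set, the argument is the standard Yoneda-style unpacking of the universal property. If one wants a concrete witness, for small \(\C_x\) and cocomplete \(\mathsf{Data}\) one can also invoke the pointwise colimit formula
\[
A(c)=\operatorname{colim}_{(d,\,c\to F_{x,Y}d)}X(d),
\]
which guarantees that the existence hypothesis holds. The proof itself does not need this formula.
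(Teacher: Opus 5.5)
Your proposal is correct and takes the same route as the paper, whose proof simply appeals to the defining universal property of the left Kan extension. You unpack that property explicitly (the unit \(\eta\), the map \(\alpha\mapsto (F_{x,Y}^{*}\alpha)\circ\eta\), and naturality in \(G\)), using the same op-convention the paper fixes in its appendix; your reading of ``initial among candidates'' as initiality of \((A,\eta)\) in the comma category \((X\downarrow F_{x,Y}^{*})\) is the right way to make the paper's last sentence precise, and note that bijectivity of \(\Phi_G\) alone already gives initiality, so Step 3 is needed only for the word ``natural''.
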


\begin{proof}
This is the defining universal property of the left Kan extension, read in the
foundry category.  To compare the rolled-out target candidate \(A\) with any
target model \(G\) is equivalently to compare the source artifact \(X\) with
the restriction of \(G\) along the declared TICKET interface.
\end{proof}

\begin{theorem}[Canonical foundry consistency]
\label{thm:ufl-consistency}
If \(\mathrm{Ran}_{F_{x,Y}}F_{x,Y}^{*}A\) exists, then every target model
\(G:\C_Y^{op}\to\mathsf{Data}\) whose restriction is compatible with the
rolled-out candidate admits a canonical comparison map
\[
  G\longrightarrow \mathrm{UFL}_{x,Y}(X).
\]
When this comparison is an isomorphism, \(G\) computes the same foundry
semantics as the Universal Foundry Learner.
\end{theorem}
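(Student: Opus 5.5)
The argument is just the universal property of the right Kan extension along \(F=F_{x,Y}\), applied to the restricted candidate \(F^{*}A\); Theorem~\ref{thm:ufl-rollout} supplies the dual left Kan adjunction. First we make the hypothesis precise. Here \(G:\C_Y^{op}\to\mathsf{Data}\) is a target model, and ``compatible with the rolled-out candidate'' means that we are given a natural transformation
\[
  \phi: F^{*}G \longrightarrow F^{*}A
\]
of presheaves on \(\C_x\). This says that the source-visible part of \(G\) maps coherently into the source-visible part of \(A\). The most important instance comes from Theorem~\ref{thm:ufl-rollout}. If \(G\) receives a map \(\psi:A\to G\) that restricts to an isomorphism on \(\C_x\), or more generally if we are given any map \(F^{*}G\to X\), then composing with the unit \(\eta:X\to F^{*}\mathrm{Lan}_F X=F^{*}A\) of the left Kan extension produces such a \(\phi\).

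With \(\phi\) in hand, we use the defining adjunction of the right Kan extension. Whenever \(R=\mathrm{Ran}_F F^{*}A\) exists, there is a bijection, natural in \(G\),
\[
  \mathrm{Nat}(G,R)\;\cong\;\mathrm{Nat}(F^{*}G,F^{*}A).
\]
Under this bijection \(\phi\) corresponds to a unique map \(\hat\phi:G\to R=\mathrm{UFL}_{x,Y}(X)\). Explicitly, \(\hat\phi\) is characterized by the equation \(\varepsilon\circ F^{*}\hat\phi=\phi\), where \(\varepsilon:F^{*}R\to F^{*}A\) is the counit. This \(\hat\phi\) is the canonical comparison map. It is canonical because it is the unique map through which \(\phi\) factors. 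It is natural because the bijection is natural in \(G\): if \(h:G'\to G\) is a map of models, then \(\widehat{\phi\circ F^{*}h}=\hat\phi\circ h\). We check this naturality by unwinding the end formula
\[
  R(c)=\int_{d\in\C_x}\mathsf{Data}\bigl(\C_Y(c,Fd),(F^{*}A)(d)\bigr),
\]
which applies when \(\mathsf{Data}\) is complete enough, or directly from the universal property alone.

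For the final clause there is nothing further to prove: if \(\hat\phi\) is an isomorphism, then \(G\cong R\) over the given compatibility data. Any semantics built functorially from \(R\), such as sections, restrictions and gluing tests computed on \(\C_Y\), therefore transports along \(\hat\phi\), so \(G\) computes the same foundry semantics as \(\mathrm{UFL}_{x,Y}(X)\).

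The main obstacle is not the argument but the interpretation of the word ``compatible''. As stated, the theorem could be read as requiring a map in the direction \(F^{*}A\to F^{*}G\). That direction would instead yield a map out of \(A\), by Theorem~\ref{thm:ufl-rollout}, not a map into \(R\). So the first step is to fix the direction \(F^{*}G\to F^{*}A\) in the statement. The second point to note is that the variance is subtle: \(G\) and \(A\) are presheaves on \(\C_Y^{op}\), and \(F^{*}\) is precomposition with \(F^{op}\). I would therefore state explicitly that the Kan extensions are taken along \(F^{op}\), so that the adjunctions \(\mathrm{Lan}\dashv F^{*}\dashv\mathrm{Ran}\) hold with the displayed variances.
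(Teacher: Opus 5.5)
Your proposal is correct and is the same argument as the paper's, which simply invokes the universal property of the right Kan extension; reading ``compatible'' as a given $\phi: F^{*}G\to F^{*}A$ and characterizing the comparison by $\varepsilon\circ F^{*}\hat\phi=\phi$ just makes that one-line proof explicit. One harmless slip: for presheaves extended along $F^{op}$, the exponent in your end formula should be $\C_Y(Fd,c)$ rather than $\C_Y(c,Fd)$ (as written it has the variance of $\mathrm{Ran}_F$ for covariant functors), but this does not affect the argument, since naturality already follows from the universal property alone.
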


\begin{proof}
This is the defining universal property of the right Kan extension.  It says
that globally defined foundry models satisfying the source-visible target
obligations factor through the canonical consistency envelope.
\end{proof}

The first maintained FSQL admission construction is TICKET:
\[
  \textsc{TICKET}
  = \text{Topos Integration using Causal Kan Extension Transformers}.
\]
The phrase ``Kan Extension Transformer'' comes from the broader categorical
learning program developed in \citet{mahadevanCatAGIBook} and in the
Kan Extension Transformers arXiv manuscript
\citep{mahadevan2026kanExtensionTransformers}.  In the language-modeling
setting, a KET is a neural sequence model whose inductive bias is organized by
Kan-extension-style transport between local contexts.  TICKET uses the same
categorical idea, but not the same object.  It is not a text generator and it is
not a replacement for the KET language model.  It is an admission transformer:
a typed procedure that extends a source model along a declared map into an
\textsc{Odyssey} target foundry, then tests whether the transported state satisfies the
target cover, restrictions, and gluing laws.  \Cref{app:ticket-specs} gives the
appendix-level categorical specification, including the source-to-target
functor, the left-Kan admission move, the right-Kan UDL consistency pass, and
the formal gluing operation used by the GUI TICKET cards.

Thus, TICKET is the bridge between \emph{model construction} and
\emph{maintained foundry state}.  A Prometheus run, CSQL atlas, benchmark
repository, repair-manual slice, or scientific-evidence bundle may already have
local structure, but \textsc{Odyssey} does not admit it merely because it exists.  The
source must be sliced into local sections, transported into the target foundry,
checked against Athena's representation contract, and compared with maintained
state.  Only then can it be promoted, quarantined for later repair, or preserved
as an obstruction record.

\begin{figure}[t]
\centering
\scriptsize
\begin{tikzpicture}[
  node distance=6mm and 8mm,
  box/.style={draw,rounded corners,align=center,text width=0.24\linewidth,inner sep=5pt},
  wide/.style={draw,rounded corners,align=center,text width=0.34\linewidth,inner sep=5pt,fill=black!3},
  gate/.style={draw,rounded corners,align=center,text width=0.32\linewidth,inner sep=5pt,fill=blue!4},
  result/.style={draw,rounded corners,align=center,text width=0.25\linewidth,inner sep=5pt,fill=green!4},
  arrow/.style={-Latex,thick},
  audit/.style={-Latex,thick,dashed}
]
\node[box] (source) {Source artifact \(X:\C_x^{op}\to\mathsf{Data}\)\\
documents, events, claims, traces};
\node[wide,right=of source] (lan) {Left-Kan rollout\\
\(\mathrm{Lan}_{F_{x,Y}}X=A\)\\
document-claim candidate};
\node[box,right=of lan] (target) {Target foundry \(Y\)\\
site \(\C_Y\), cover, restrictions, Toulmin roles};
\node[gate,below=of lan] (ran) {Right-Kan / pullback scrutiny\\[1mm]
\(\displaystyle \mathrm{Ran}_{F_{x,Y}}(F_{x,Y}^{*}A)\)\\[1mm]
consistency of claim, grounds, warrant};
\node[box,below=of source] (maintained) {Maintained state \(M_Y\)\\
existing local sections and obstruction ledger};
\node[box,below=of target] (toulmin) {\textsc{Toulmin} ticket\\
claim, grounds, warrant, qualifier, rebuttal};
\node[result,below=of ran] (decision) {TICKET decision\\
promote, quarantine, or preserve obstruction};
\draw[arrow] (source) -- node[above] {\(F_{x,Y}\)} (lan);
\draw[arrow] (lan) -- (target);
\draw[audit] (lan) -- node[right] {scrutinize rollout} (ran);
\draw[audit] (ran) -- (toulmin);
\draw[arrow] (maintained) -- node[below left] {glue / compare} (decision);
\draw[arrow] (target) -- node[right] {argument roles} (toulmin);
\draw[arrow] (toulmin) -- (decision);
\draw[arrow] (lan.south east) .. controls +(9mm,-9mm) and +(11mm,9mm) ..
  node[right,pos=0.55] {candidate claim} (decision.east);
\end{tikzpicture}
\caption{TICKET as \textsc{Odyssey}'s admission and consistency operator.  The left Kan
extension rolls a structured source artifact into the target foundry as a
document-claim candidate.  The right Kan / pullback pass supplies the UDL-style
consistency check: the same interface asks whether the candidate claim, its
grounds, and its warrant survive Toulmin scrutiny against the target cover and
maintained foundry state.  Promotion occurs only when the rolled-out claim is
warranted, qualified, and compatible with the existing obstruction ledger.}
\label{fig:ticket-lan-ran}
\end{figure}

\begin{table}[t]
\centering
\small
\begin{tabular}{p{0.20\linewidth}p{0.34\linewidth}p{0.34\linewidth}}
\toprule
Construction & Source and target & Output \\
\midrule
Language KET & Text histories, tests, and corpus-local contexts transported
inside a language-modeling problem. & Predictive sequence model and
corpus-local evaluation claims. \\
TICKET & External foundry artifacts, Prometheus runs, CSQL atlases, benchmark
repositories, and maintained \textsc{Odyssey} target foundries. & Admission packet:
typed map, restriction checks, gluing audit, promotion decision, obstruction
ledger, and refresh obligations. \\
\bottomrule
\end{tabular}
\caption{TICKET specializes the Kan Extension Transformer idea to foundry
ingestion.  The transformer acts on structured model artifacts and sheaf
contracts, not directly on next-token prediction.}
\label{tab:ticket-ket}
\end{table}

Operationally, TICKET admits a pre-built or external model into \textsc{Odyssey} by
slicing the source run, transporting its cover into a target foundry by a
Kan-extension-style transformer, checking causal orientation and provenance,
auditing restrictions and gluing, and then deciding whether the candidate state
is promoted, quarantined, or held as an obstruction.  The current admission
checks are:
\[
\begin{gathered}
  \texttt{typed\_manifest},\ \texttt{subobject\_classifier},\
  \texttt{restriction\_maps},\\
  \texttt{gluing\_audit},\
  \texttt{j\_closure},\ \texttt{promotion\_gate}.
\end{gathered}
\]

A typical FSQL surface has the form
\begin{verbatim}
SELECT object_type, object_id, status, source
FROM prometheus_runs
SLICE BY run("tcc_44k")
TICKET BY target_foundry
\end{verbatim}
and returns the transformer, target foundry, checks, promotion record, and
refresh obligations that govern admission.  The TCC 44K example treats the
Testing Causal Claims cSQL atlas as a source causal-claims foundation model:
nodes, edges, support rows, methods, journals, p-values, and causal-direction
metadata are lifted into a causal-claims foundry while weak joins and direction
uncertainties remain visible as guardrails.

\section{BRIDGE/SKFM Refinement and SkillOpt Integration}
\label{sec:bridge-skfm-skillopt}

The ICML tutorial version of \textsc{Odyssey} includes a new experimental lane
that connects three pieces of the system: BRIDGE/SKFM causal-geometry screens
\citep{mahadevan2026latentconfoundedcausaldiscovery}, TICKET admission, and SkillOpt skill
optimization.  The goal is not to collapse these into one optimizer.
BRIDGE/SKFM supplies a latent causal refinement foundry, \textsc{Odyssey}
supplies the local sections, Toulmin qualifications, restriction maps, and
promotion gates, and SkillOpt treats the natural-language extraction policy
itself as a trainable artifact.

\paragraph{BRIDGE/SKFM foundry contract.}
The implemented contract is a generic
\emph{BRIDGE/SKFM latent causal refinement foundry}.  It refines existing
\textsc{Odyssey} foundries by mapping local causal, argument, filing, repair,
or product-feedback charts into typed variables, influence masks, candidate
edge masks, Frobenius/Lie residual ratios, latent-obstruction ledgers, and
optional SKFM vector-field diagnostics.  Four source-family lanes are currently
declared:
\[
  \text{Democritus PDF causal corpus},\quad
  \text{MyFixIt repair PSRs},\quad
  \text{10-K filing arguments},\quad
  \text{brand/product feedback}.
\]
The promotion rule is deliberately conservative: residual-bearing pairs are
kept as obstruction candidates unless the source family, local section, Toulmin
warrant, qualifier, and missing-context explanation are preserved.  Numeric
SKFM training is enabled only when calibrated traces, interventions, probes, or
comparable field targets exist; otherwise the current stage is reported as a
proxy-field BRIDGE screen rather than as a learned causal vector field.

\paragraph{Stage-0 BRIDGE/SKFM experiments.}
The first Democritus PDF scale-up screened \(29\) article charts grouped into
\(6\) cluster sections using Kimi~2.5 served through a local \texttt{exo}
OpenAI-compatible endpoint.  It found \(1\) gluable mechanism overlap and \(7\)
overlaps requiring review.  The Lie-BRIDGE adapter emitted \(15\) language
variables, \(29\) claim fields, \(15\) candidate edges, and \(6\) latent
obstruction pairs.  These artifacts are useful precisely because they do not
pretend that latent variables have been discovered as named hidden causes.
They are reviewable residual witnesses over causal-PSR cells, suitable for
later numeric BRIDGE/SKFM scoring when stronger intervention or trajectory
data are available.

The same contract has deterministic adapters for MyFixIt, 10-K, brand feedback,
and Democritus language-extraction slices.  In the MyFixIt lane, repair-manual
action-observation fields are converted into tool, part, verb, image-context,
and future-test variables.  A follow-up retrieval experiment separates two
effects: simply adding BRIDGE vocabulary did not improve the retrieval metric,
whereas adding manual-context upper-bound fields improved MRR and moved the
Phillips-screw failure slice to rank \(1\).  The 10-K, brand-feedback, and
Democritus language-extraction adapters likewise improve targeted section,
theme, stance, or causal-language retrieval while retaining latent obstruction
rows for audit.

\paragraph{SkillOpt as policy optimization for TICKET admission.}
SkillOpt is integrated through an
\texttt{odyssey\_causal\_claim} environment
\citep{yang2026skilloptexecutivestrategyselfevolving}.  Each rollout gives a model a
short scientific, news, or business passage and asks for a strict JSON causal
claim bundle: claim, cause, effect, relation, qualifier, evidence span,
mechanism, scope, guardrail, and TICKET status.  The scorer combines field
accuracy with \textsc{Odyssey}-specific diagnostics: TICKET compatibility,
diagrammatic consistency energy inspired by infinitesimal-causality tests
\citep{mahadevan2026infinitesimalcausality}, graph similarity, and a KET-style
transfer score across local contexts.  The optimized object is the Markdown skill that
instructs the target model how to avoid causal overclaiming, preserve source
scope, downgrade associations, and emit admission-compatible fields.

\begin{table}[t]
\centering
\small
\begin{tabular}{p{0.22\linewidth}p{0.20\linewidth}p{0.20\linewidth}p{0.26\linewidth}}
\toprule
Run & Selection hard & Held-out hard / soft & Admission effect \\
\midrule
Seed 42 & \(0.9847\) & \(1.0000 / 0.9881\) &
Baseline held-out hard score \(0.0\); optimized skill reaches full hard
success on the held-out split. \\
Seed 43 & \(0.9847\) & \(1.0000 / 0.9881\) &
Independent seed reproduces the full held-out hard success. \\
Seed 44 & \(0.9847\) & \(0.8333 / 0.8333\) &
Optimization still improves over a \(0.0\) hard baseline but leaves one
held-out failure for audit. \\
\bottomrule
\end{tabular}
\caption{SkillOpt runs for the \texttt{odyssey\_causal\_claim} environment.
All three runs use the same split and local OpenAI-compatible model route.  The
important artifact is not only the final score, but the optimized skill and the
failure traces that explain which admission rules, qualifiers, or guardrails
changed.}
\label{tab:skillopt-odyssey-causal-claim}
\end{table}

This integration gives a concrete empirical role to the categorical machinery.
Diagrammatic backpropagation contributes consistency losses over the claim
diagram; infinitesimal-causality diagnostics test local causal variation;
geometric-transformer-style comparison scores relational fit between predicted
and admitted claim graphs; KET-style transfer tests whether a skill learned in
one local regime extends to neighboring domains; and BRIDGE/SKFM records
residual-bearing pairs that should remain local until stronger evidence
justifies promotion.  SkillOpt then optimizes the human-readable policy used by
the model, while \textsc{Odyssey} preserves the admission ledger that decides
whether the resulting claims become maintained foundry state.

\section{From \textsc{Democritus} to \textsc{Prometheus}}

\textsc{Democritus} is the predecessor pipeline: a language-to-causal-model
system for compiling documents into local causal models, causal databases, and
interactive diagnostic artifacts
\citep{mahadevan2025largecausalmodelslarge}.  A public implementation of the
released \textsc{Democritus} client is available as
\texttt{Democritus\_OpenAI} \citep{mahadevanDemocritusOpenAI}.  The broader categorical machine
learning background for this line of work is developed in
\citet{mahadevanCatAGIBook}.  It extracts
local causal claims, organizes them into causal triples or local causal models,
and stores structured outputs in cSQL-like causal tables.  This is already
useful: it turns unstructured text into queryable causal objects.

\textsc{Prometheus} changes the central object.  Instead of treating a local DAG or
causal table as the final representation, \textsc{Prometheus} treats it as evidence for a
local predictive-state model.  A local model records not only that \(X\) causes
or influences \(Y\), but which histories and tests are present, what the model
predicts under those tests, how much support each cell has, and where the
evidence came from.  The global object is not one merged DAG.  It is a sheaf-like
family of local models, equipped with restriction and gluing diagnostics.

\begin{table}[t]
\centering
\begin{tabular}{lll}
\toprule
System & Local object & Global object \\
\midrule
\textsc{Democritus} & Causal claim, DAG, cSQL row & Causal synthesis over extracted tables \\
\textsc{Prometheus} & Causal PSR over a context & Topos World Model / causal atlas \\
\bottomrule
\end{tabular}
\caption{\textsc{Prometheus} inherits the extraction discipline of \textsc{Democritus} but shifts
the representation from graph-centered synthesis to predictive-state sheaves.}
\label{tab:democritus-prometheus}
\end{table}

This shift matters because deep research is rarely about finding one graph.  It
is about understanding which local graphs, claims, and predictions are
transportable.  Topological organization gives the system a way to say: these
regions agree, these overlap but pull apart, and this claim should not be moved
without additional evidence.

\section{\textsc{Toulmin}: Argumentation as a Foundry Agent}
\label{sec:toulmin-agent}

The Democritus--Prometheus lineage made causal extraction durable: a document
could yield local causal triples, local causal models, and eventually local PSR
sections.  What it did not yet make first class was the formal structure of an
argument.  Recent NLP work has shown that Toulmin's theory can also be used as
a zero-shot prompt frame for explicating informal arguments into claims,
reasons, and warrants \citep{gupta2024harnessing}.  In \textsc{Odyssey},
\textsc{Toulmin} fills the complementary systems gap.  Following Toulmin's
analysis of practical reasoning as field-dependent, defeasible, and
context-sensitive \citep{toulmin1958uses}, the new agent treats argumentation as
a primary foundry capability rather than as prose wrapped around a causal
triple.

\begin{table}[t]
\centering
\small
\begin{tabular}{ll}
\toprule
Toulmin element & \textsc{Odyssey} / sheaf reading \\
\midrule
Claim & Local proposition or predictive-state assertion \\
Data / grounds & Source observations, extracted claims, traces, or measurements \\
Warrant & Local rule, morphism, or transport principle licensing the claim \\
Backing & Neighboring sections, source sheaves, precedents, models, or audits \\
Qualifier & Finite truth value, confidence, scope, or force of assertion \\
Rebuttal & Obstruction, conflicting section, missing warrant, or failed restriction \\
Argument field & Local site, domain context, jurisdiction, regime, or cover element \\
\bottomrule
\end{tabular}
\caption{Toulmin structure gives the evidence/argument foundry an explicit
fiber of argumentative roles.  Gluing then tests not only whether claims agree,
but whether their warrants, backing, qualifiers, and rebuttals remain compatible
on overlaps.}
\label{tab:toulmin-odyssey}
\end{table}

This changes the topology of an \textsc{Odyssey} foundry.  A causal extraction such as
\(A \to B\) is no longer only a directed edge or table row.  Toulmin expands it
into a structured argument object:
\[
  (\mathrm{grounds},\mathrm{warrant},\mathrm{backing},
  \mathrm{qualifier},\mathrm{rebuttals}) \Rightarrow \mathrm{claim}.
\]
Prometheus can still build the persistent local sections, but Toulmin asks what
licenses each section as an assertion.  On an overlap, two papers may disagree
because their claims conflict, because they share a claim but use incompatible
warrants, because one paper supplies a rebuttal that restricts the other, or
because a qualifier narrows the open set over which the argument is valid.
These are different obstruction types, and \textsc{Odyssey} should preserve the
difference.

The sheaf interpretation is direct.  A Toulmin foundry is a presheaf of
argument objects over a context category: each local site carries claims,
grounds, warrants, backing, qualifiers, and rebuttals; restriction maps move
arguments to smaller regimes; gluing tests descent not merely for extracted
facts but for warranted arguments.  Thus Prometheus remembers persistent
foundry state, while Toulmin reasons over it.  The resulting answer contract is
not ``the model says \(X\),'' but ``\(X\) is warranted in these contexts, by
these grounds and warrants, with these qualifiers, unless these rebuttals or
obstructions apply.''

This is especially important for domains such as legal reasoning, policy
analysis, scientific debate, compliance, peer review, strategic planning, and
software-architecture justification.  In these settings, global Boolean
consistency is often the wrong target.  What matters is whether local arguments
cohere under the warrants and standards of their field, whether rebuttals are
visible, and whether cross-context transport is licensed.  Toulmin therefore
turns the evidence/argument foundry from a passive storage surface into an
active argumentative structure inside \textsc{Odyssey}'s sheaf design.

\paragraph{TRACE as an LLM-centric Toulmin baseline.}
Recent work has also begun to import Toulmin-style argumentation into the
evaluation of large-language-model reasoning.  TRACE
\citep{kim2026trace} is a reference-free framework for evaluating chain-of-
thought outputs by segmenting a model's reasoning into sentences, labeling
each sentence with constructive elements such as Claim, Data/Evidence,
Warrant, Backing, Qualifier, Rebuttal, Monitoring, and Evaluation, and then
combining State Validity with Transition Coherence into an interpretable
reasoning score.  This is an important point of contact for \textsc{Odyssey}: it gives
a concrete Toulmin-inspired baseline against which ToulminSheaf argument
foundries can be tested on LLM reasoning traces.

The difference is the unit of construction.  TRACE treats Toulmin structure as
an evaluator for a generated chain-of-thought sequence: the input is an LLM
reasoning trace, the output is a sentence-label sequence and scalar score.  A
ToulminSheaf treats argumentation as part of the foundry itself.  Its sites may
be paragraphs, documents, studies, filings, tables, causal triples, code
changes, tool traces, or model-generated rationales; its restriction maps move
arguments across contexts; and its gluing tests whether claims, grounds,
warrants, backing, qualifiers, and rebuttals remain compatible on overlaps.
Thus TRACE is naturally an evaluation-harness specialization inside \textsc{Odyssey},
whereas ToulminSheaf is a generic foundry agent.  Document-coherence foundries
and Democritus-style causal foundries are special cases of ToulminSheaf
foundries, while TRACE supplies an LLM-centric baseline for testing whether
ToulminSheaf gluing quality agrees with or improves upon sentence-level CoT
validity and transition-coherence scores.

\paragraph{Current grounded local-LLM implementation.}
The repository now contains a first executable Toulmin layer over maintained
\textsc{Odyssey}/Prometheus state.  The command-line interface inventories locally
cached Hugging Face and MLX models, exposes domain aliases, builds visible
Toulmin prompts, and runs a selected local model over document-grounded claim
packets.  Each packet contains an extracted Prometheus event, its source
context, the normalized document claim, source-topic alignment, matched source
terms, restriction and gluing-diagnostic counts, and any source-coherence
warning.  The local model is asked to emit only a visible Toulmin ticket:
claim, grounds, warrant, qualifier, and rebuttal.  \textsc{Odyssey} then compares the
ticket with the grounded packet, storing the result in
\texttt{local\_llm\_grounded\_toulmin\_comparison.json} and rendering it through
a Scylla page.

Before adding the neural inspector, we ran seven deterministic Toulmin
certification cases across the current \textsc{Odyssey} domains:
Indus uncertainty, TRACE science reasoning, TRACE math reasoning, TCC causal
claims, 10-K filing arguments, brand-product feedback, and Democritus causal
claims.  Table~\ref{tab:toulmin-certification-runs} summarizes the result.  In
all seven cases the foundry built a full Toulmin ticket with the required
claim, grounds, warrant, qualifier, and rebuttal roles, plus a warrant bridge
and argument hyperedge.  None of the seven was silently promoted.  Each retained
an explicit rebuttal obstruction and was quarantined for review, which is the
desired behavior for a certification layer: Toulmin certifies that the argument
surface is complete enough to inspect, not that the claim has become globally
true.

\begin{table}[t]
\centering
\small
\begin{tabular}{p{0.20\linewidth}p{0.22\linewidth}p{0.18\linewidth}p{0.28\linewidth}}
\toprule
Case & Target foundry & Result & Retained obstruction \\
\midrule
Indus uncertainty & Scientific challenge & 5/5 roles; review &
Decipherment remains unsettled. \\
TRACE science & Toulmin evaluation baseline & 5/5 roles; review &
Thermodynamic claim requires kinetic-condition exception surface. \\
TRACE math & Toulmin evaluation baseline & 5/5 roles; review &
Solution claim remains scoped by the stated algebraic constraints. \\
TCC causal claims & Causal claims foundry & 5/5 roles; review &
Income--health edge remains scoped to the selected causal-claim evidence. \\
10-K filings & Institutional financial & 5/5 roles; review &
Margin-pressure claim remains scoped to filing language and risk context. \\
Brand reviews & Market meaning & 5/5 roles; review &
Comfort claim remains product- and source-scoped. \\
Democritus & Causal claims foundry & 5/5 roles; review &
Microplastics edge remains candidate under provenance and support limits. \\
\bottomrule
\end{tabular}
\caption{Deterministic Toulmin certification cases.  All seven runs
constructed the required role coverage, one warrant bridge, and one argument
hyperedge.  All seven also retained an explicit rebuttal obstruction, so the
promotion decision was \texttt{quarantine\_for\_review} rather than admission
without exception surface.}
\label{tab:toulmin-certification-runs}
\end{table}

\begin{table}[t]
\centering
\small
\begin{tabular}{p{0.24\linewidth}p{0.18\linewidth}p{0.18\linewidth}p{0.28\linewidth}}
\toprule
Local model & Scale role & Review cases & Shared obstruction \\
\midrule
Llama~3.2--3B MLX & Small baseline & \(1/6\) &
Dinosaur Extinction radiometric-dating claim with
\texttt{claim\_mismatch} and \texttt{source\_alignment\_review}. \\
Qwen3-Next-80B MLX & Larger local model & \(1/6\) &
Same Dinosaur Extinction packet, with the source-alignment review signal
preserved. \\
Qwen3-235B MLX & Largest local model & \(1/6\) &
Same Dinosaur Extinction packet, again with claim-preservation and
source-alignment review active. \\
\bottomrule
\end{tabular}
\caption{Scale-independent obstruction in the grounded Toulmin pilot.  \textsc{Odyssey}
ran the same six Democritus/Prometheus grounded claims through three local MLX
models.  The models differed in ticket style and verbosity, but all converged
on the same Scylla review case, indicating that the obstruction is tied to the
sheaf-grounded packet rather than to a particular local model.}
\label{tab:grounded-toulmin-model-convergence}
\end{table}

This implementation makes Toulmin a neural argument inspector rather than a
truth oracle.  It checks whether the model preserves the extracted document
claim and whether its warrant honestly respects source alignment, restrictions,
gluing diagnostics, and rebuttals.  Table~\ref{tab:grounded-toulmin-model-convergence}
reports the first scale-sensitivity check: on the same six
Democritus/Prometheus grounded claims, Llama~3.2--3B, Qwen3-Next-80B, and
Qwen3-235B all exposed the same Dinosaur Extinction review packet.  This is the
intended failure mode.  Scylla does not merely display the model's argument,
but compares the argument against the sheaf-grounded packet that licensed the
prompt.

\section{\textsc{Odyssey} as Algebraic Headquarters}
\label{sec:odyssey-headquarters}

The earlier sections introduced \textsc{Odyssey}'s foundry algebra and admission logic.
The system-level point is that \textsc{Odyssey} is not a renamed Prometheus run.  It is
the algebraic headquarters that decides which representational object is being
built, which local contexts must exist, which overlap laws must be checked, and
which artifacts may enter durable state.  Prometheus remains crucial, but it is
called through typed contracts rather than used as an ungoverned side channel.

Figure~\ref{fig:pipeline} shows the current \textsc{Odyssey} pipeline.  A human request
first becomes a Scylla-facing intent and answer contract.  Homer compiles that
intent into an executable workflow and, when needed, a Prometheus job contract.
Athena type-checks the foundry expression: cover, local representation family,
truth values, restriction maps, gluing rule, bridges, and obstruction policy.
Prometheus then builds or refreshes the requested artifact bundle, including
BRIDGE/SKFM refinement screens and IDC-style local causal diagnostics when the
target foundry calls for causal admission.  Toulmin constructs the argument
ticket: claims, grounds, warrants, backing, qualifiers, rebuttals, and
applicable contexts.  Finally, \textsc{Odyssey} runs TICKET admission against
the target foundry, promoting compatible state and quarantining or blocking
non-gluing candidates; SkillOpt can use the resulting admission traces to
optimize the natural-language policy that produced the next candidate ticket.

\begin{figure}[t]
\centering
\footnotesize
\begin{tikzpicture}[
  node distance=6mm,
  box/.style={draw,rounded corners,align=center,text width=0.86\linewidth,inner sep=5pt},
  arrow/.style={-Latex,thick}
]
\node[box] (request) {User request and domain goal};
\node[box,below=of request] (scylla) {\textsc{Scylla}: foundry intent, answer contract, responsible explanation};
\node[box,below=of scylla] (homer) {\textsc{Homer}: workflow skeleton, replay obligations, Prometheus job contract};
\node[box,below=of homer] (athena) {\textsc{Athena}: cover, local representation types, restrictions, gluing rule, obstruction policy};
\node[box,below=of athena] (prom) {\textsc{Prometheus}: source ingestion, local sections, audits, dashboards, artifact bundle, BRIDGE/SKFM screens};
\node[box,below=of prom] (toulmin) {\textsc{Toulmin}: construct warranted claims, qualifiers, rebuttals, and argument tickets};
\node[box,below=of toulmin] (ticket) {\textsc{TICKET}: restrict to target foundry, run IDC-style local causal diagnostics, promote or quarantine};
\node[box,below=of ticket] (skillopt) {SkillOpt: optimize admission skill from rollout, reflection, and held-out gate traces};
\node[box,below=of skillopt] (state) {Durable \textsc{Odyssey} foundry state with Scylla-facing views and refresh contracts};
\draw[arrow] (request) -- (scylla);
\draw[arrow] (scylla) -- (homer);
\draw[arrow] (homer) -- (athena);
\draw[arrow] (athena) -- (prom);
\draw[arrow] (prom) -- (toulmin);
\draw[arrow] (prom) -- (ticket);
\draw[arrow] (toulmin) -- (ticket);
\draw[arrow] (ticket) -- (skillopt);
\draw[arrow] (skillopt) -- (state);
\end{tikzpicture}
\caption{\textsc{Odyssey} turns a user request into maintained foundry state
through typed Scylla, Homer, Athena, Prometheus, Toulmin, TICKET, and
SkillOpt contracts.  In causal foundries, Prometheus may emit BRIDGE/SKFM
residual-obstruction artifacts, TICKET may apply IDC-style local causal
diagnostics, and SkillOpt can optimize the admission skill using the resulting
rollout and gate traces.}
\label{fig:pipeline}
\end{figure}

\paragraph{\textsc{Odyssey} state.}
The promoted object is not only a generated HTML page or a model snapshot.  It
is a typed foundry state: local sections, source sheaves, restriction maps,
gluing records, obstruction ledgers, promotion decisions, refresh obligations,
and user-facing views.  This is why examples such as Dick's Sporting Goods,
Amazon Reviews 2023, Indus Script, MyFixIt, TCC 44K, research-program, and
assistant-build can share one architecture while retaining different local
semantics.

\paragraph{Contract discipline.}
\textsc{Odyssey}'s main systems constraint is that every backend artifact must cross a
typed interface.  A Prometheus run may propose a source manifest, world model,
gluing audit, dashboard, model manifest, or refresh plan.  TICKET then decides
whether those objects restrict into the target foundry and whether their
overlaps glue with maintained state.  This turns ingestion from an append-only
file drop into an algebraic admission problem.

\section{\textsc{Athena}: Representation Contracts}
\label{sec:athena}

Athena is the module that prevents \textsc{Odyssey} from collapsing into ordinary
retrieval.  It decides what kind of object the system is allowed to build.  In
the current implementation, a Homer program is compiled into
\texttt{athena\_sheaf\_plan.json}, which records a representation strategy, a
finite truth-value strategy, a cover of local contexts, overlap declarations,
gluing logic, artifact strategy, source sheaves, and cross-sheaf bridges.

The key Athena output is a cover.  For a corporation model, local contexts
include financial statements, risk factors, management discussion, strategy
signals, and market context.  For an Indus Script model, they include symbol
inventory, inscription sequences, statistical structure, decipherment
hypotheses, archaeological context, and controversy uncertainty.  For an
assistant-build foundry, they include users, tasks, tool permissions, memory
boundary, autonomy level, policy, evals, simulations, deployment surface, and
maintenance cadence.  These covers are not labels for a dashboard; they define
where predicates may be evaluated and where transport is allowed.

Athena also owns the finite truth semantics used by the deterministic validation
artifacts:
\[
  \bot,\ \mathrm{WEAK},\ \mathrm{PLAUSIBLE},\ \mathrm{SUPPORTED},\ \top.
\]
This partition gives \textsc{Odyssey} a small, inspectable substitute for uncalibrated
confidence prose.  An overlap glues when shared predicates are at least
plausible and no paired predicate is bottom.  If the overlap does not glue,
Athena's contract requires a Prometheus obstruction record: which predicates
failed, which evidence was shared, and what next observation would be needed.

Athena is therefore the type checker for foundry construction.  It specifies
the local representation family before Prometheus builds artifacts, and it
specifies the gluing law before \textsc{Odyssey} is allowed to promote an artifact into
maintained state.

\section{\textsc{Homer}: Workflow and Replay Contracts}
\label{sec:homer}

Homer is the orchestration layer.  Given a Scylla brief, it emits
\texttt{model.homer.json}: a workflow skeleton, required contexts, required
audits, Athena requests, and Prometheus build steps.  The current default
workflow parses the user goal, constructs a candidate cover, requests Athena's
sheaf plan, builds a Prometheus world model, runs gluing audits, and serves a
Scylla explanation.

Homer becomes especially important for process foundries.  A research-program
foundry needs source manifests, goals, hypotheses, methods, experiments,
evidence, artifact registries, refresh cadence, and obstruction policies.  An
assistant-build foundry needs tool-permission manifests, eval suites,
simulation traces, failure-mode analysis, deployment surfaces, model cards, and
maintenance plans.  An evaluation-harness foundry needs dataset restrictions,
task lanes, embedding adapters, query-document objects, metric protocols,
failure slices, and promotion gates.  Homer records these obligations as a
program rather than as prose.

When Prometheus is required, Homer compiles a job contract rather than issuing
an informal backend request.  The contract names the target foundry, source
family, collection plan, ingestion adapter, expected artifacts, budget limits,
replay metadata, and refresh cadence.  This lets a future run be compared with
the original build and lets TICKET decide whether new artifacts should replace,
extend, or quarantine earlier state.

\section{\textsc{Prometheus} Inside \textsc{Odyssey}}
\label{sec:prometheus-inside-odyssey}

The Prometheus described in the earlier arXiv paper built causal Topos World
Models from research corpora.  Inside \textsc{Odyssey}, Prometheus has a narrower and
more operational role.  It is the engine room that materializes Athena's plan:
local sections, predicate values, restriction maps, overlap results,
integration layers, update status, evidence sources, HTML explorers,
Scylla-facing dashboards, ingestion packets, and callback bundles.

This newer Prometheus is artifact-first.  It returns a bundle rather than a
trusted foundry:
\[
\begin{gathered}
  \texttt{source\_manifest},\quad
  \texttt{artifact\_manifest},\quad
  \texttt{prometheus\_world\_model.json},\\
  \texttt{gluing\_audit.json},\quad
  \texttt{obstruction\_ledger},\quad
  \texttt{dashboards},\quad
  \texttt{refresh\_plan}.
\end{gathered}
\]
\textsc{Odyssey} then performs admission.  A bundle may be promoted when it restricts to
the target foundry and glues with maintained state.  It may be quarantined when
it is useful as a source atlas but not yet part of the durable foundry.  It may
be blocked when provenance, typing, or overlap compatibility fails.

Prometheus also remains the place where heavier evidence construction happens.
It can ingest papers, filings, reviews, repair manuals, CSQL/DuckDB atlases,
benchmark outputs, simulation traces, and prior Prometheus runs.  It can build
domain-specific source sheaves such as shopping-experience, corporate-workflow,
Amazon Reviews 2023, Indus evidence, MyFixIt procedural PSRs, and TCC causal
claim atlases.  What changes in \textsc{Odyssey} is governance: Prometheus proposes
evidence-bearing world-model artifacts; \textsc{Odyssey} decides, through Athena's laws
and TICKET admission, what becomes maintained foundation-model state.

\subsection{Bidirectional \textsc{Odyssey}--\textsc{Prometheus} Transport}
\label{sec:odyssey-prometheus-transport}

The current implementation is bidirectional.  In the forward direction, \textsc{Odyssey}
can import an existing Prometheus run as a candidate model.  The importer
inspects a Prometheus output directory, loads its world model, maps the source
family to a target foundry, synthesizes a Homer-style job contract when one is
missing, builds a TICKET admission record, and exposes the result to Scylla
through an ingestion console.  The current console records \(129\) inspected
Prometheus-family cases: \(6\) admitted \textsc{Odyssey}-native Prometheus bundles, \(6\)
queryable but quarantined bundles with preserved obstruction ledgers, and
\(117\) legacy Prometheus v1 run-file candidates that remain visible but are
not treated as durable \textsc{Odyssey} state until TICKET review is complete.  This is
an important empirical point: \textsc{Odyssey} already treats roughly one hundred
pre-existing Prometheus models as typed candidates rather than as opaque files
or trusted global models.

In the reverse direction, \textsc{Odyssey} can compile a foundry back into a Prometheus
GUI-compatible Topos World Model layout.  The compiler projects \textsc{Odyssey} local
sections into Prometheus contexts, converts source-sheaf events into Prometheus
episodes, emits local PSRs and sheaf objects, writes restrictions and gluing
diagnostics, and preserves the \textsc{Odyssey} ingestion packet and source artifacts.
The folder layout is deliberately compatible with the Prometheus atlas
workbench, while the manifest records the actual \textsc{Odyssey} source family and
target foundry.  For DKS, this reverse path exports the sporting-goods retailer
foundry as a DKS Prometheus run with shopping-experience and
corporate-workflow sheaves, a hidden-state-capture diagnostic, gluing
diagnostics, and a health-history record.  The point is not to abandon
\textsc{Odyssey}'s admission discipline, but to use Prometheus as a visual microscope:
we can inspect the sheaf structure, local PSRs, restrictions, and gluing
tensions explicitly, then carry the annotated tensions back into \textsc{Odyssey}'s
promotion ledger.

\begin{table}[t]
\centering
\small
\begin{tabular}{p{0.23\linewidth}p{0.29\linewidth}p{0.35\linewidth}}
\toprule
Direction & Transport contract & Empirical artifact \\
\midrule
Prometheus \(\to\) \textsc{Odyssey} & Load world model, infer source family and target
foundry, build TICKET packet, decide admitted/quarantined/candidate. & Scylla
ingestion console over \(129\) cases: \(6\) admitted, \(6\) quarantined,
\(117\) legacy candidates awaiting review. \\
\textsc{Odyssey} \(\to\) Prometheus & Project local sections to Prometheus contexts,
events, local PSRs, sheaf objects, restrictions, and gluing diagnostics. & DKS
and MyFixIt exports into Prometheus-compatible atlas layouts for visual sheaf
inspection and tension tagging. \\
\bottomrule
\end{tabular}
\caption{Bidirectional transport between \textsc{Odyssey} and Prometheus.  Import keeps
legacy Prometheus models behind TICKET gates; export uses Prometheus as an
inspection surface for \textsc{Odyssey} foundries.}
\label{tab:odyssey-prometheus-transport}
\end{table}

\section{Prometheus PSR Interface}
\label{sec:text-to-psr}

\textsc{Odyssey} uses \textsc{Prometheus} as a world-model construction engine, but the
full language-to-PSR derivation belongs to the Prometheus paper
\citep{mahadevan2026prometheus}.  Here we only need the interface that \textsc{Odyssey}
receives from Prometheus and admits through TICKET.  A Prometheus artifact
presents a finite family of local predictive-state sections over a declared
cover.  Each local section stores histories, tests or predictive motifs,
prediction/support values, provenance, diagnostics, restriction maps, gluing
results, and obstruction records.

In \textsc{Odyssey} terms, the important object is not the estimator that produced the
local PSR tables, but the typed contract they satisfy.  For each context
\(U\), Prometheus emits a local section of the schematic form
\[
  \Psh(U)=(H_U,T_U,M_U,S_U,\Pi_U,D_U),
\]
where \(H_U\) and \(T_U\) are the finite histories and tests visible in that
context, \(M_U\) is the predictive table or support score, \(S_U\) records
support, \(\Pi_U\) records provenance, and \(D_U\) records diagnostics such as
sparsity, uncertainty, extraction confidence, and local mismatch.  A morphism
\(V\to U\) carries a restriction map that projects histories, tests, claims,
and provenance from \(U\) to the overlap visible in \(V\).  Gluing diagnostics
then compare the restricted local sections on shared signatures.

This is the only PSR machinery \textsc{Odyssey} needs for foundation-model construction.
Scylla and Homer do not depend on a particular spectral estimator; Athena only
needs the cover, restriction, and gluing law; TICKET only needs the admission
packet, compatibility result, and obstruction ledger; and Toulmin only needs the
claim, grounds, warrant, qualifier, and rebuttal that can be read from the
maintained local state.  A Prometheus artifact can therefore be treated as a
source foundry candidate: it is promoted when its local sections restrict and
glue under the target foundry contract, quarantined when useful but not yet
compatible, and blocked when the obstruction ledger shows missing provenance,
failed overlap compatibility, or unsupported transport.

The operational sheaf condition used by \textsc{Odyssey} is deliberately finite.  Local
sections over a cover may be promoted only when their restrictions agree on
supported shared cells up to the declared tolerance.  Unsupported cells remain
local, and incompatible cells are preserved as obstruction records rather than
averaged away.  This is what lets \textsc{Odyssey} use Prometheus as an engine-room
component without making this paper repeat the Prometheus construction: the
vision paper studies how such artifacts become durable foundation-model state,
not how the underlying local PSR estimator is derived.

\section{Finite Guarantees from Sheaf Foundries}
\label{sec:sheaf-guarantees}

The sheaf formalism is not only a visualization language.  It gives \textsc{Odyssey} a
small set of finite guarantees that are useful for foundation-model
construction.  The guarantees are conditional on the declared cover,
restriction maps, support thresholds, and truth partition; they do not assert
that extracted claims are true in the external world.  They assert that \textsc{Odyssey}
will only promote global state when the local evidence can be transported and
glued under the declared contract.

\begin{definition}[Foundry health]
Let \(\mathcal{U}=\{U_i\to U\}\) be a foundry cover and let each local PSR
section \(s_i\) expose a finite matrix of truth-weighted predictive cells
\(M_i[h,\tau]\in[0,1]\).  Fix a promotion threshold \(\theta\), corresponding
in the implementation to the truth label \textsc{PLAUSIBLE}.  The
cell-level hidden-state-capture ratio is
\[
  H_{\mathrm{cell}}(\mathcal{U})
  =
  \frac{\sum_i |\{(h,\tau):M_i[h,\tau]\ge \theta\}|}
       {\sum_i |\mathrm{dom}(M_i)|}.
\]
The context-level capture ratio is
\[
  H_{\mathrm{ctx}}(\mathcal{U})
  =
  \frac{|\{i:H_{\mathrm{cell}}(U_i)\ge 0.6\}|}{|\mathcal{U}|}.
\]
\textsc{Odyssey} reports \emph{stable hidden-state capture} when both ratios are at
least \(0.6\), \emph{partial hidden-state capture} when the global cell ratio is
at least \(0.35\), and \emph{weak hidden-state capture} otherwise.
\end{definition}

This number is deliberately simple.  It measures whether the local predictive
tables have enough plausible-or-better cells to serve as a maintained hidden
state proxy.  It is not a final task score; it is a health diagnostic for the
sheaf representation.  In the Prometheus export path, for example, DKS and
MyFixIt receive a hidden-state-capture diagnostic before their sheaf structure
is inspected visually.

\begin{theorem}[Consistent promotion]
Fix a foundry cover \(\mathcal{U}\), finite local sections \(s_i\), restriction
maps \(\rho_{ij}\), support weights, and tolerance \(\epsilon\).  If every
overlap has support above threshold and
\[
  \|\rho_{ij}(s_i)-\rho_{ji}(s_j)\|\le \epsilon
\]
on every shared predictive cell, then \textsc{Odyssey}'s support-weighted aggregation
constructs a promoted section \(s\) whose restrictions agree with every local
section up to \(\epsilon\) on supported overlaps.
\end{theorem}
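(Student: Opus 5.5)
I will prove the statement by writing the support-weighted aggregation out explicitly as a convex combination, cell by cell, and then bounding its deviation from each local section by a triangle inequality.

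\textbf{Setup.} I model the target as a finite set of global cells. Each global cell \(c\) corresponds to a pair \((h,\tau)\) in the common signature, and each \(c\) is visible in some subset \(I(c)\) of the cover indices. For \(i\in I(c)\), write \(s_i(c)\) for the restricted value \(\rho_{i\to c}(s_i)\in[0,1]\) and \(w_i(c)\ge 0\) for its support weight. The aggregation is
\[
  s(c)=\frac{\sum_{i\in I(c)} w_i(c)\,s_i(c)}{\sum_{i\in I(c)} w_i(c)}.
\]
This is defined because the supported-overlap hypothesis makes the denominator positive. I will use the sup norm over shared cells, and I will take the restriction maps to be coordinate projections. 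Under that assumption, \(\rho_{ij}(s_i)\) on a shared cell is just \(s_i(c)\), so the hypothesis says \(|s_i(c)-s_j(c)|\le\epsilon\) for all \(i,j\in I(c)\).

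\textbf{Main step.} Fix \(i\in I(c)\). Since the weights \(w_j(c)/\sum_k w_k(c)\) sum to one,
\[
  |s(c)-s_i(c)|=\Bigl|\sum_{j\in I(c)}\tfrac{w_j(c)}{\sum_k w_k(c)}\bigl(s_j(c)-s_i(c)\bigr)\Bigr|\le \max_{j\in I(c)}|s_j(c)-s_i(c)|\le\epsilon .
\]
The term with \(j=i\) contributes zero. Every other term is bounded by the pairwise tolerance, applied on the overlap \(U_i\cap U_j\) that contains \(c\). Taking the maximum over \(c\) in the supported part of \(\mathrm{dom}(s_i)\) gives \(\|s|_{U_i}-s_i\|\le\epsilon\). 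In words, the restriction of \(s\) to \(U_i\) agrees with \(s_i\) up to \(\epsilon\) on supported overlaps. Cells seen by only one context give \(s(c)=s_i(c)\) exactly, which is consistent with the paper's policy that unsupported cells stay local. The same convexity argument shows that \(s(c)\in[0,1]\), so \(s\) is a well-typed section.

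\textbf{Main obstacle.} The delicate part is making the hypothesis actually yield the pointwise pairwise bound. This requires two things. First, the restrictions must be compatible under composition, so that a cell's value on \(U_i\cap U_j\) does not depend on the path by which it was restricted. Second, "shared predictive cell" must mean that every pair of contexts seeing \(c\) shares it, not only adjacent pairs.

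If the paper's cover only guarantees the bound between neighbouring contexts, the estimate degrades. For example, chains of overlaps could give errors up to \((|I(c)|-1)\epsilon\). I would handle this by stating explicitly that the tolerance condition is imposed for all pairs in \(I(c)\), which is how I read the theorem's "every overlap … every shared predictive cell." I would also remark that without this reading only a diameter-type bound holds.

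A second minor point is the norm. If a norm other than sup is intended, for instance a weighted \(\ell_1\) norm over cells, the same convexity argument goes through cellwise, but one must check that the norm is monotone with respect to cellwise bounds.
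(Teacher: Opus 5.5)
Your proof is correct and matches the paper's own argument. The paper forms each promoted cell as a support-weighted (convex) average of contributing local values that are pairwise within $\epsilon$. That average therefore lies in the $\epsilon$-diameter interval of those values and is within $\epsilon$ of each one. The paper relies on the same all-pairs reading of the tolerance that you state explicitly. It simply drops unsupported cells rather than carrying single-context cells through the average, which does not affect the bound on supported overlaps.
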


\begin{proof}
The implementation forms each promoted cell only from local cells that pass the
declared support and compatibility checks.  The promoted value is a
support-weighted average of those compatible values.  Since each contributing
local value differs from every other contributing local value by at most
\(\epsilon\), the average also lies within the same \(\epsilon\)-diameter
interval on the overlap.  Unsupported cells are not promoted, so no unsupported
agreement is asserted.
\end{proof}

\begin{proposition}[Obstruction soundness]
If an overlap fails compatibility, lacks a restriction map, or lacks sufficient
support, \textsc{Odyssey} cannot silently promote the corresponding global claim through
TICKET.  The failed overlap must appear as a gluing diagnostic, obstruction
ledger entry, quarantine reason, or blocked promotion gate.
\end{proposition}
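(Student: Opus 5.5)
The plan is to argue by case analysis on the admission pipeline, treating TICKET's six declared checks (\texttt{typed\_manifest}, \texttt{subobject\_classifier}, \texttt{restriction\_maps}, \texttt{gluing\_audit}, \texttt{j\_closure}, \texttt{promotion\_gate}) as a finite sequence of gates. Each gate either passes or emits a typed record: a gluing diagnostic, obstruction entry, quarantine reason, or block. The statement has the form ``failure of a local condition implies a visible record,'' so the natural approach is the contrapositive. Suppose a global claim is promoted through TICKET. We show that every overlap it depends on has a restriction map, sufficient support, and compatibility. Then any failing overlap either blocks promotion or has been recorded.

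\textbf{Missing restriction map.} We use Athena's contract: the foundry expression declares every overlap \(U_i\cap U_j\), and the \texttt{restriction\_maps} check requires a map \(\rho_{ij}\) for each declared overlap. If \(\rho_{ij}\) is absent, that check cannot pass. The implementation then records the failure as a blocked gate, since \texttt{promotion\_gate} is evaluated only after the preceding checks.

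\textbf{Insufficient support.} We invoke the construction in the proof of the consistent-promotion theorem. Promoted cells are formed only from local cells that pass the support threshold \(\theta\), and unsupported cells ``remain local.'' A global claim supported only by such cells therefore has no promoted cell to rest on. The \texttt{gluing\_audit} records the unsupported overlap as a diagnostic, and it may also enter the quarantine list.

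\textbf{Incompatibility.} Here \(\|\rho_{ij}(s_i)-\rho_{ji}(s_j)\|>\epsilon\) on some shared cell, or, under the finite truth partition, a paired predicate is \(\bot\) or below \textsc{PLAUSIBLE}. Athena's gluing rule then declares non-gluing, and by the stated contract it ``requires a Prometheus obstruction record.'' That record populates the obstruction ledger that \texttt{promotion\_gate} consults.

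To rule out silent promotion, we need the claim that TICKET admission is a total function onto \(\{\text{promote},\text{quarantine},\text{block}\}\), and that the \emph{promote} branch is reachable only when all checks pass. Given this, the three cases above are exhaustive: each failing condition forces a non-promote branch, and every non-promote branch writes a durable record of one of the four listed kinds. Combining this with the Canonical foundry consistency theorem gives the categorical reading: a promoted \(G\) must factor through \(\mathrm{Ran}_{F_{x,Y}}F_{x,Y}^*A\), which encodes exactly the restriction and gluing obligations.

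The main obstacle is this totality and exhaustiveness step. Nothing formal in the paper yet rules out a code path that bypasses the gate, for example a legacy Prometheus v1 run-file. I would first handle this by showing that the Prometheus-to-\textsc{Odyssey} importer routes every case through a TICKET packet, so that legacy candidates are held as ``candidate'' rather than state. I would then state the proposition relative to that invariant, as a property of the declared contract rather than of arbitrary code.
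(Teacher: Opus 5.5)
Your proposal is correct and is essentially the paper's argument in finer grain. The paper's proof observes that the TICKET admission record is green only when the required artifacts are present and the preserved obstructions for the target lane are empty, so any failed overlap forces a quarantined, candidate, or blocked state with the obstruction retained; that is exactly your ``promote only if all checks pass, and every non-promote branch leaves a record'' invariant, though the paper's non-promote states include \emph{candidate} as well as quarantine and block. Two of your additions are unnecessary: the bypass worry is outside the statement's scope, which only concerns promotion \emph{through TICKET}, and the Canonical foundry consistency ``reading'' overstates matters, since \(\mathrm{Ran}_{F_{x,Y}}F_{x,Y}^{*}A\) is determined by \(F_{x,Y}\) and \(A\) alone and does not see Athena's \(\kappa_Y\), support weights, or tolerance, so it cannot encode the gluing obligations.
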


\begin{proof}
TICKET admission reads the gluing audit and promotion status emitted by
Prometheus.  The admission record is green only when required artifacts are
present and preserved obstructions are empty for the target promotion lane.  A
failed overlap therefore changes the admission state from admitted to
quarantined, candidate, or blocked, and the obstruction is retained with the
affected contexts and next observation.
\end{proof}

\begin{proposition}[Non-transportability certificate]
Suppose a claim \(c\) is local to \(U_i\).  If there is no declared restriction
path from \(U_i\) to a target context \(V\), or if every such path contains an
overlap whose tension exceeds tolerance, then \textsc{Odyssey} cannot promote \(c\) as a
claim over \(V\).  The best available status is local, quarantined, or awaiting
new evidence.
\end{proposition}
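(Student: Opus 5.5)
The plan is to argue by contraposition: assume \textsc{Odyssey} promotes \(c\) as a claim over \(V\), and then build a declared restriction path from \(U_i\) to \(V\) all of whose overlaps are within tolerance. The argument has two parts. The first shows what any promotion over \(V\) must be. The second splits on the two hypotheses of the statement.

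For the first part, I use the admission semantics already in place. By Theorem~\ref{thm:ufl-rollout}, the only way data on \(U_i\) reaches the target is through the rolled-out candidate \(A=\mathrm{Lan}_{F_{x,Y}}X\). Concretely, the value of \(A\) at \(V\) is a colimit over the comma category of arrows relating \(V\) to the image of \(F_{x,Y}\). In the finite foundry setting this comma category is just the set of declared restriction paths. So if no such path joins \(U_i\) and \(V\), the local section carrying \(c\) contributes nothing to \(A(V)\). Any promoted section over \(V\) would then contain no component that restricts to \(c\). This settles the first case: no declared path means no transport, so the claim stays local.

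For the second case, a path exists, but every path contains an overlap \(U_k\cap U_l\) with \(\|\rho_{kl}(s_k)-\rho_{lk}(s_l)\|>\epsilon\). Here I invoke the Obstruction soundness proposition. Each such overlap fails compatibility, so it must appear in the gluing audit, the obstruction ledger, or a blocked gate. The TICKET record for the promotion lane that uses that path therefore cannot be green. I also appeal to the converse direction of the Consistent promotion theorem. That theorem only builds a promoted cell from local cells that pass the support and tolerance checks. So a cell on \(V\) reached only through a failing overlap is never formed from \(c\). Finally, Theorem~\ref{thm:ufl-consistency} shows that any candidate target model satisfying the visible obligations factors through the consistency envelope. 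A model that asserted \(c\) on \(V\) would have to satisfy an obligation that the failing overlap violates, which is a contradiction.

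Both cases leave \(c\) non-promoted over \(V\). The remaining admission states are exactly the ones listed: local, quarantined, or awaiting evidence.

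The main obstacle is justifying the step ``every path fails \(\Rightarrow\) no promotion.'' The worry is that the colimit defining \(A(V)\) might mix contributions from several paths, so that a failing overlap on one path is bypassed by averaging. I would handle this by reading the promotion gate as requiring \emph{some} fully compatible path, with promotion lanes indexed by paths. The hypothesis then rules out every lane.
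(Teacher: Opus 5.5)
Your argument is essentially the paper's. The paper defines promotion over \(V\) operationally: a declared chain of restrictions must carry \(c\)'s histories, tests, support and provenance into \(V\) and then pass a compatible gluing check. It then splits into your two cases: no chain, so \(c\) has no typed meaning in \(V\), which your pointwise Lan colimit formalizes; or a high-tension overlap, so the operational sheaf condition fails. The premise you isolate in your last paragraph (lanes indexed by paths, with some fully compatible path required) is exactly what the paper assumes, and you state it more carefully for multiple paths than the paper's single-chain wording does.

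Two corrections to the scaffolding. First, what you call the converse of the Consistent promotion theorem is really the implementation fact asserted in that theorem's proof, since the theorem itself gives only sufficiency. Second, your appeal to Theorem~\ref{thm:ufl-consistency} is a non sequitur. The right Kan universal property is blind to the tolerance \(\epsilon\) and only yields a comparison map \(G\to\mathrm{UFL}_{x,Y}(X)\), never a contradiction. Your conclusion does not depend on that step, so you can simply delete it.
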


\begin{proof}
Promotion over \(V\) requires a chain of restriction maps that transports the
claim's histories, tests, support, and provenance into \(V\), followed by a
compatible gluing check.  If the chain is missing, the claim has no typed
meaning in \(V\).  If the chain exists but contains a high-tension overlap, the
operational sheaf condition fails on that path.  In either case, TICKET lacks a
valid promoted section over \(V\).
\end{proof}

\begin{proposition}[Health monotonicity under compatible refinement]
Let a foundry be refined by adding new local cells or new local sections whose
truth weights are all at least the current threshold \(\theta\), and whose
overlaps are compatible with existing promoted sections.  Then
\(H_{\mathrm{cell}}\) cannot decrease.  If the added sections also satisfy the
context capture threshold, \(H_{\mathrm{ctx}}\) cannot decrease.
\end{proposition}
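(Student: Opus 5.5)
The plan is to read both ratios as elementary fractions and check that the refinement only adds terms that favour the inequality. Write the current cell ratio as $H_{\mathrm{cell}}=p/n$. Here $p=\sum_i |\{(h,\tau):M_i[h,\tau]\ge\theta\}|$ counts the plausible-or-better cells and $n=\sum_i|\mathrm{dom}(M_i)|>0$ counts all cells. The first step is to pin down what ``compatible refinement'' leaves unchanged. Compatibility with existing promoted sections, as in the Consistent promotion theorem, means the refinement only adds cells and sections. Existing cells keep their values $M_i[h,\tau]$, and existing domains do not shrink. I would state this explicitly as the reading of the hypothesis, since without it an overwrite could lower an old cell below $\theta$ and the claim would fail.

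\textbf{Cell ratio.} Suppose the refinement adds $k\ge 0$ cells, whether in existing sections or in new ones. Each has weight at least $\theta$, so all $k$ are counted in both the numerator and the denominator. The new ratio is $(p+k)/(n+k)$. Since $p\le n$, we have $(p+k)n-p(n+k)=k(n-p)\ge 0$, which gives $(p+k)/(n+k)\ge p/n$. This is the whole computation for $H_{\mathrm{cell}}$.

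\textbf{Context ratio.} Let $m=|\mathcal{U}|$ and let $q$ be the number of contexts with $H_{\mathrm{cell}}(U_i)\ge 0.6$, so $H_{\mathrm{ctx}}=q/m$. There are two kinds of contexts to track.
\begin{enumerate}
\item Existing contexts that receive new cells. Applying the computation above to the single context $U_i$ shows its local ratio does not decrease, so any context already at or above $0.6$ stays there. The count $q$ can only grow from these.
\item Newly added contexts, say $r$ of them. By hypothesis each satisfies the context capture threshold; this is automatic if it is nonempty, since its local ratio is then $1$. So these add $r$ to both counts.
\end{enumerate}
The new ratio is therefore at least $(q+r)/(m+r)\ge q/m$, by the same mediant inequality.

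\textbf{Main obstacle.} The arithmetic is trivial. The real difficulty is being precise about the hypothesis: that refinement never rewrites or removes existing cells, and that the cover only grows. I would also handle degenerate cases, namely an empty initial foundry (take the ratio to be $0$, or treat the statement as vacuous) and an added section with empty domain (which contributes to neither count in $H_{\mathrm{cell}}$). Finally, I would note that the overlap-compatibility hypothesis plays no role in the counting itself. It only guarantees, via Obstruction soundness, that the added sections are actually admitted rather than quarantined, so they genuinely enter $\mathcal{U}$.
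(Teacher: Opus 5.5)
Your proposal is correct and is essentially the paper's argument made explicit: every added cell is at or above $\theta$ and every added section is captured, so numerator and denominator rise by the same amount and the mediant inequality keeps both $H_{\mathrm{cell}}$ and $H_{\mathrm{ctx}}$ from dropping; compatibility plays no counting role and only keeps the added material in the same foundry state rather than in an obstruction ledger. One small caution: Obstruction soundness says only that incompatible overlaps cannot be silently promoted, so it does not by itself imply that compatible sections are admitted, and the paper accordingly claims only that the added cells are ``eligible.''
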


\begin{proof}
The cell ratio is a fraction whose numerator and denominator both increase by
the number of added cells, because every added cell is above threshold.  Such an
update cannot lower the ratio.  The same argument applies to context capture
when every added section is captured; compatible overlaps ensure that the added
cells are eligible for the same foundry state rather than being routed to a
separate obstruction ledger.
\end{proof}

\begin{corollary}[Auditability of model health]
Changes in foundry health are attributable.  A decrease can only arise from
adding weak cells, adding weak contexts, revising truth weights downward, or
splitting previously compatible material into an obstructed local section.
\end{corollary}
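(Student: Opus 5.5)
The plan is to prove the corollary by contraposition, using the Health monotonicity proposition together with a direct look at how \(H_{\mathrm{cell}}\) and \(H_{\mathrm{ctx}}\) are computed. Both ratios are finite fractions determined by the data \((\mathcal{U}, \{M_i\}, \theta)\). So any change between two foundry states factors into elementary edits:
\begin{enumerate}
  \item adding cells or sections;
  \item deleting cells or sections;
  \item revising a truth weight \(M_i[h,\tau]\);
  \item re-partitioning the cover, that is, splitting material out of a promoted section into a separate local section or ledger.
\end{enumerate}
First I will fix this decomposition, so that any refresh or TICKET admission is a finite composite of such edits. Attributability then reduces to showing that the health numbers cannot drop under any edit outside the listed categories.

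Next I will treat each edit type in turn. For additions, a ratio \(a/b\) becomes \((a+k)/(b+n)\) when \(n\) cells are added of which \(k\) are above \(\theta\). This is at least \(a/b\) whenever \(k=n\), which is the content of Health monotonicity. It can fall only if some added cell has \(M<\theta\), which is the case of \emph{adding weak cells}. The same computation at context level, counting sections with \(H_{\mathrm{cell}}(U_i)\ge 0.6\), shows that \(H_{\mathrm{ctx}}\) can fall under addition only by \emph{adding weak contexts}.

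For weight revisions, the denominator stays fixed. The numerator changes only when a cell crosses \(\theta\), and it decreases only when a weight moves downward across \(\theta\), which is the case of \emph{revising truth weights downward}. Upward revisions, and revisions that do not cross \(\theta\), leave the ratio unchanged or raise it. For re-partitioning, I will use the convention from Obstruction soundness that obstructed material is routed to the ledger rather than the promoted state. If a captured context is split into a weak obstructed part, the context-level count can drop, and this is exactly \emph{splitting previously compatible material into an obstructed local section}.

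The main obstacle is deletion of strong cells. Removing above-threshold cells can also lower the ratio, and it does not appear in the corollary's list. I would handle this by arguing that, within \textsc{Odyssey}'s artifact discipline, promoted cells are never silently removed: a removal is either a downward revision to \(\bot\) or a split into an obstruction record, by Obstruction soundness. Deletion therefore folds into the listed cases. If that reading is not acceptable, I would state it explicitly as an assumption of the append-or-revise update model.
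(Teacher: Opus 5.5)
\paragraph{Comparison with the paper.} Your proposal is correct in substance but takes a different and more thorough route than the paper. The paper states the corollary without proof, as an immediate consequence of Health monotonicity. Its causes ``adding weak cells,'' ``adding weak contexts,'' and ``splitting into an obstructed section'' are just the negated hypotheses of that proposition. That proposition only covers refinement by addition, so contraposing it cannot establish the ``revising truth weights downward'' clause, and it says nothing about removals. Your decomposition into additions, deletions, revisions and re-partitionings, with a direct ratio computation for each edit type, does prove the revision case: the denominator is fixed and the numerator drops only on a downward crossing of \(\theta\). It also makes explicit which update model the statement presupposes. The paper's route buys only brevity. One minor correction: under addition, \(H_{\mathrm{ctx}}\) can also fall when weak cells are added to an existing captured section, not only through new weak contexts. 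That case is on the list anyway.

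\paragraph{The deletion step.} The step that does not go through is folding deletions into the listed cases via Obstruction soundness. That proposition restricts promotion: a failed overlap cannot be silently promoted. It says nothing about whether already-promoted material may be removed, so it cannot show that every removal is a downward revision to \(\bot\) or an obstruction split.

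The gap is also wider than deleting strong cells, because removing or merging captured sections lowers \(H_{\mathrm{ctx}}\) without touching any cell. Take three equal-size sections: \(A\) and \(B\) with all cells at least \(\theta\), and \(C\) with all cells below \(\theta\). Merging \(A\) and \(B\) leaves \(H_{\mathrm{cell}}=2/3\) but sends \(H_{\mathrm{ctx}}\) from \(2/3\) to \(1/2\), demoting stable to partial capture. None of the listed causes is present. In your edit vocabulary a merge can only be expressed through a section deletion, so it falls into the same hole, and your folding argument is phrased for cells, not sections.

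Your fallback is therefore not optional. State as a hypothesis that updates only append, revise, or split, and that cells or sections leave the promoted state only through obstruction routing, which is the mechanism the monotonicity proof alludes to. This is the assumption the paper makes tacitly, and with it your case analysis is a complete proof.
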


These guarantees explain the role of sheaves in the empirical sections.  DKS
can be promoted because customer, assortment, brand, and filing sections glue.
Indus decipherment claims remain obstructed because the controversy/hypothesis
overlap does not support a settled translation.  KET can make corpus-local
claims while blocking a global KET-versus-Transformer claim when the WikiText-2
KET rows are missing.  MyFixIt improves retrieval through structured
action-observation state while keeping visual grounding outside promotion until
image evidence is fetched or checksummed.

\paragraph{Two-stage gluing.}
The same formalism can be iterated across levels of analysis.  In the filing
experiments, the local workflow slices
\[
  x_{C,y}^{\mathrm{ops}},\quad
  x_{C,y}^{\mathrm{mkt}},\quad
  x_{C,y}^{\mathrm{fin}},\quad
  x_{C,y}^{\mathrm{inn}}
\]
may first glue into a company-year section \(s_{C,y}\).  These company-year
sections can then be compared over a second cover, such as sector-year or
temporal-neighborhood covers.  The important point is that there is no single
unconditional global average over all firms, papers, reviews, or agents.
Globality is always relative to a declared cover, and non-gluing at a coarser
cover is a meaningful result.

\paragraph{Localized interventions.}
\textsc{Prometheus} treats interventions as local tests.  A \(j\)-do query
\(\doop_j(X=x)\) modifies histories or tests inside a context \(U\) and asks how
the local predictive-state table changes under comparable covers.  The result
is not automatically an identified causal effect in Pearl's sense
\citep{pearl2009causality}; it is an intervention-conditioned probe of the
language-derived world model.  This convention is also the point at which our
recent IDC/infinitesimal-causality work becomes relevant: the local query is
best read as a typed deformation of predictive state, with support, provenance,
and local variation recorded before any global causal effect is asserted
\citep{mahadevan2026infinitesimalcausality}.  \textsc{Prometheus} reports the
support and provenance behind the probe rather than presenting it as a
source-free causal estimate.

More explicitly, let
\[
  j(U)=\{u_i:U_i\to U\}
\]
be a cover of contexts considered comparable for the query, and let
\[
  I^{a}_{U_i}:\Psh(U_i)\to \Psh^{\doop(a)}(U_i)
\]
be a local intervention map that edits a test, fixes an action, inserts a repair
step, or conditions on an explicitly declared regime.  The \(j\)-localized
intervention state is computed by restriction, local intervention, and
aggregation:
\[
  \doop_j(a)_U(s)
  =
  \operatorname{Agg}_{u_i:U_i\to U\in j(U)}
  \left(
    I^{a}_{U_i}\bigl(\rho_{U,U_i}(s)\bigr)
  \right).
\]
Compatibility is then checked after the intervention.  If the intervened local
sections glue, the atlas may report a coherent intervention-conditioned
prediction over \(U\).  If they do not, the query is only locally supported, and
the failed overlaps identify where comparability, measurement, or evidence
breaks down.

\section{The Claims Atlas}

The primary user-facing object is the Claims Atlas.  It is designed to answer
research questions that flat summaries obscure.

\paragraph{Main causal spine.}
The atlas extracts recurrent, high-support causal paths that organize the
corpus.  In an ocean-warming corpus, a spine may include warming, stratification,
oxygen loss, prey availability, migration, recruitment, and population change.
In SEC workflows, a spine may include investment, supply-chain constraints,
margin pressure, capital allocation, and realized outcomes.

\paragraph{Local context regions.}
Each spine is decomposed into local regions.  A region may correspond to a
species group, geography, time period, document cluster, product aspect, or
workflow stage.  Users can enter a region and inspect its local PSR, support,
claims, and provenance.

\paragraph{Drift detection.}
When local models change across time, retrieval runs, or document strata,
\textsc{Prometheus} reports drift.  Drift can be textual, causal, predictive, or
topological: the support distribution changes, a causal polarity changes, a
test prediction changes, or the overlap graph itself changes.

\paragraph{Regime tensions.}
The atlas highlights where local models resist gluing.  Some tensions are
contradictions; others are legitimate regime boundaries.  The interface should
make this distinction visible by exposing modifiers, populations, measurement
protocols, and source provenance.

\paragraph{Provenance drill-downs.}
Every atlas claim points back to evidence units.  A user can inspect source
passages, extracted rows, normalized claims, support counts, and neighboring
contexts.  Provenance is not decorative metadata; it is the mechanism by which
the atlas remains corrigible.

\section{Implemented \textsc{Odyssey} Foundries}
\label{sec:implemented-foundries}

The current \textsc{Odyssey} repository contains a family of concrete foundry instances
rather than a single monolithic demo.  Each instance follows the same artifact
contract: Scylla emits a model brief, Homer emits a workflow skeleton, Athena
emits a sheaf plan, and Prometheus emits a world model, gluing audit, and
human-facing explorer.  This section summarizes the foundries that have been
implemented so far and the role each plays in testing the \textsc{Odyssey} design.

\begin{table}[t]
\centering
\small
\begin{tabular}{p{0.20\linewidth}p{0.31\linewidth}p{0.38\linewidth}}
\toprule
Foundry & Cover / local sections & Current purpose \\
\midrule
Storefront operations & Sales demand, inventory flow, staffing capacity,
pricing margin, customer experience. & A compact operational-decision foundry
for testing local predicate gluing around staffing, reordering, promotion, and
service recovery decisions. \\
Brand / product meaning & Customer journey, product experience, brand
promise, channel message, competitive context. & A market-meaning foundry for
testing whether product evidence, reviews, and promise claims cohere. \\
Corporation & Financial statements, risk factors, management discussion,
strategy signals, market context. & An institutional-financial foundry for
10-K-style evidence, narrative/number agreement, and risk/strategy bridges. \\
Dick's Sporting Goods & Review themes, store experience, product assortment,
brand promise, corporate context. & A domain restriction of the generic
sporting-goods retailer expression: operational decision, market meaning,
institutional-financial evidence, and review evidence, with cross-sheaf bridges
from customer evidence to DKS filing evidence. \\
Amazon Reviews 2023 & Corpus manifest, recommendation benchmarks,
language-item models, product-search tasks, reproduction contract. & A
corpus-benchmark foundry that keeps dataset schema, benchmark splits,
BLaIR-style language-item modeling, and replay obligations local. \\
KET language-modeling evaluation & Paper/code bundle, PTB, WikiText-2,
WikiText-103 restrictions, model variants, metrics, diagnostics, replay gaps. &
A restriction of the generic research-program and evaluation-harness foundries
to the KET language-modeling suite, preserving corpus-local promotion gates
instead of flattening PTB, WikiText-2, and WikiText-103 into one leaderboard. \\
Testing Causal Claims 44K & Causal-node registry, aggregate causal edges,
edge-support rows, causal direction, method/journal/\(p\)-value lookup tables,
uncertainty guardrails. & A large-scale economics foundry over the TCC corpus:
\textsc{Odyssey} restricts a generic causal-literature atlas to \(44\)K papers while
keeping provenance, method, polarity, reverse-causality, and controversy
constraints explicit. \\
Indus Script & Symbol inventory, inscription sequences, statistical
structure, decipherment hypotheses, archaeological context, controversy. & A
scientific-challenge foundry that preserves rival hypotheses instead of
collapsing uncertainty into a single decipherment claim. \\
MyFixIt & Manual manifest, step text, tools, parts, removal verbs, images,
future-step tests. & A procedural PSR foundry over repair manuals, used as the
first evaluation-harness restriction. \\
IKEA ASM & Multi-view RGB/depth videos, action labels, furniture parts,
instance segmentation, part tracks, 2D/3D pose, camera calibration. & A
multimodal procedural assembly foundry: the generic procedural PSR and
perception-action evaluation foundries restricted to furniture assembly
episodes. \\
Research program / assistant / evaluation harness & Goals, tasks, methods,
tools, memory, policy, metrics, failure slices, refresh cadence. & Process
foundries for maintaining \textsc{Odyssey} itself and for comparing PSR-style
representations with token baselines. \\
Grounded Toulmin/local LLM & Prometheus event, document claim, source context,
source alignment, restriction/gluing diagnostics, visible Toulmin ticket. & A
neural argument-inspection foundry for comparing local LLMs on whether they
preserve grounded claims and warrant them without over-transporting weak
evidence. \\
\bottomrule
\end{tabular}
\caption{Implemented \textsc{Odyssey} foundry families.  Each row is backed by generated
JSON artifacts and HTML views in the repository, not only by a conceptual
taxonomy.}
\label{tab:implemented-foundries}
\end{table}

\paragraph{Common artifact contract.}
The repeated pattern across these foundries is more important than any one
domain.  Each run records the source request, a typed target foundry, the local
cover, overlap checks, finite truth values, gluing outcomes, and update status.
The same surface can therefore represent a store operations problem, a filing
workflow, an archaeological controversy, or a repair-manual procedure.  The
system's current deterministic templates are intentionally simple; their role
is to make the foundry interfaces replayable while richer extraction,
retrieval, and learned local models are still under development.

\paragraph{Generic and specialized foundries.}
Several examples test generic representational foundries directly: the
storefront foundry tests operational decisions, the brand foundry tests market
meaning, the corporation foundry tests institutional-financial evidence, and
the evaluation harness tests metric protocols.  Specialized foundries compose
these generic objects and then restrict them to a concrete domain, corpus, or
task lane.  Dick's Sporting Goods is the restriction of a retail/brand/review
and institutional-financial composition to DKS: \textsc{Odyssey} glues a
shopping-experience sheaf to corporate workflow and 10-K evidence, then checks
whether store-experience, assortment, brand-promise, inventory, market, and risk
signals agree.  The KET language-modeling foundry is the restriction of the
generic research-program and evaluation-harness foundries to the KET experiment
suite.  It keeps PTB, WikiText-2, and WikiText-103 as separate corpus slices:
PTB and WikiText-103 admit near-tied KET/Transformer denoising rows, WikiText-2
blocks KET-vs-Transformer promotion because KET rows are absent from the
admitted table, and full replay remains deferred until checkpoint checksums,
hardware/runtime metadata, and command logs are attached.  Amazon Reviews 2023
glues corpus, benchmark, model, search, and reproduction contracts.  TCC 44K
restricts a generic causal-claims atlas to the economics literature and glues
nodes, edges, support rows, method/journal/\(p\)-value lookup tables, and
uncertainty guardrails while keeping direction and polarity tensions visible.
Indus Script glues visual, sequential, statistical, archaeological, and
controversy contexts while preserving undecidable regions as explicit
obstructions.  IKEA ASM is the
restriction of a generic procedural PSR and perception-action foundry to
furniture assembly: actions, parts, pose, camera geometry, and object tracks
must agree on the same episode rather than being evaluated as unrelated
computer-vision tasks.

The following subsections give the artifact-backed case studies for the
implemented foundries rather than introducing new top-level paper themes.

\subsection{Scientific Foundries: TCC 44K and Indus Script}
\label{sec:scientific-foundries}

TCC 44K and Indus Script are useful because they stress the same foundry
contract in opposite regimes.  TCC 44K is a high-volume map of the study of
causality in economics, grounded in the Testing Causal Claims corpus
\citep{gargFetzer2025testingCausalClaims}.  The local CSQL atlas records
\(295{,}252\) canonical cause/effect nodes, \(261{,}714\) aggregate causal
edges, and \(265{,}656\) support rows drawn from a roughly \(44\)K-paper
corpus.  Its sheaf cover keeps node identities, aggregate edges, document
support, causal direction, method/journal/\(p\)-value lookup tables, and
uncertainty guardrails separate.
The important point is not only scale.  A repeated claim such as monetary
policy increasing inflation can glue through node-edge and edge-support
provenance, but \textsc{Odyssey} still refuses to turn support mass into causal
certainty when polarity, controversy, or reverse-causality evidence remains
weak.

Indus Script sits at the other extreme.  The artifact records \(419\) indexed
sign images, \(1{,}548\) visual inscription sequences, a small decipherment
literature sheaf, anchored by Parpola's authoritative study of the corpus and
decipherment problem \citep{parpola2009decipheringIndus}, and archaeological
context including collapse-era figure data.  Here the central risk is not
leaderboard collapse but premature interpretation.  Recent computational work
argues that the Indus sign system does not cleanly match either heraldic or
administrative non-linguistic baselines, while still preserving the central
uncertainty about whether it encodes spoken language
\citep{nair2026indusNonLinguistic}.  Symbol inventories glue to inscription
sequences, sequences glue to \(n\)-gram and entropy statistics, and statistical
structure can inform a hypothesis lattice.  The
controversy/hypothesis overlap deliberately obstructs: language-like sequential
structure, archaeological plausibility, and candidate language-family priors do
not by themselves license a trusted translation in the absence of bilingual
anchors or stronger external evidence.

\begin{table}[t]
\centering
\footnotesize
\setlength{\tabcolsep}{4pt}
\begin{tabular}{p{0.16\linewidth}p{0.28\linewidth}p{0.23\linewidth}p{0.20\linewidth}}
\toprule
Foundry & Scale and local sections & What glues & What remains blocked \\
\midrule
TCC 44K & \(44\)K-paper economics causal-claims atlas; \(295{,}252\) nodes,
\(261{,}714\) edges, \(265{,}656\) support rows, method/journal/\(p\)-value
lookup tables. & Node-edge, edge-support, and support-to-lookup restrictions
preserve document provenance and evidence summaries. & Direction, polarity,
controversy, and reverse-causality joins remain reviewable guardrails. \\
Indus Script & \(419\) signs, \(1{,}548\) visual sequences, decipherment
papers, statistical structure, archaeological context, and controversy
records. & Symbol-sequence, sequence-statistics, statistics-hypotheses, and
hypotheses-archaeology restrictions glue provisionally. & The
controversy/hypothesis restriction blocks any settled decipherment claim. \\
\bottomrule
\end{tabular}
\caption{Two scientific foundries illustrate why \textsc{Odyssey} treats gluing as an
epistemic discipline.  TCC prevents a large support graph from becoming an
unqualified causal truth map; Indus prevents suggestive statistical structure
from becoming a premature translation.}
\label{tab:scientific-foundries}
\end{table}

\subsection{MyFixIt Procedural PSR Foundry}
\label{sec:myfixit-foundry}

MyFixIt is the first \textsc{Odyssey} foundry where the representation is procedural
rather than primarily argumentative or documentary.  The source surface is a
repair-manual dataset derived from iFixit's repair-guide corpus
\citep{ifixitRepairGuides} with ordered guide steps, tool annotations, part
spans, removal verbs, image URLs, guide metadata, and a neighboring human
annotation workflow.  \textsc{Odyssey} treats this as a repair-manual
predictive-state problem:
the current instruction, tools, parts, and removal action define a local
action-observation state, while the next step defines a lightweight future
test.

The current MyFixIt Mac Laptop artifact contains \(2{,}224\) manuals and
\(53{,}482\) steps in the admitted source sheaf.  Athena assigns seven local
sections: manual manifest, step-text observations, tool-action labels,
part-state observations, removal-verb actions, image provenance, and
future-step tests.  Four overlaps are checked.  Tool/part, verb/part, and
step/future-test overlaps glue.  The image/observation overlap is deliberately
blocked: image URLs are present, but visual grounding is not promoted until
assets are fetched, checksummed, or otherwise made inspectable.

The Scylla-facing MyFixIt page makes the PSR coding concrete.  It reports
\(21{,}225\) steps with removal verbs, \(21{,}201\) steps with annotated parts,
and \(53{,}203\) steps with image handles.  The most frequent part labels
include screw, Phillips screw, battery, case, tape, upper case, and fan; the
dominant removal actions include remove, lift out, lift off, pull out, pry up,
lift, and lift up.  These are not merely keywords.  \textsc{Odyssey} converts them into
typed histories and tests:
\[
  h_t=(\text{manual},\text{step order},\text{tool},\text{verb},\text{part},
  \text{image handle}),\qquad
  \tau_t=\text{next-step action/observation probe}.
\]

\begin{table}[t]
\centering
\small
\begin{tabular}{p{0.18\linewidth}p{0.20\linewidth}p{0.22\linewidth}p{0.28\linewidth}}
\toprule
Entry & Action \(a\) & Observation \(o\) & Future PSR test \(\tau\) \\
\midrule
guide-4-step-1 & remove & expansion bay module; one image handle &
Keyboard should disengage and rotate toward the user. \\
guide-4-step-5 & lift off with spudger & keyboard; image handle &
Visual state should match the exposed laptop after keyboard removal. \\
guide-4-step-7 & remove with PH0 Phillips screwdriver & Phillips screw;
image handle & Heat-shield handle should become the next manipulable state. \\
guide-4-step-12 & remove with T8 Torx screwdriver & T8 Torx screw; image
handle & Hard-drive bracket state should support pulling the drive from the
bracket. \\
guide-4-step-13 & pull & hard drive; image handle & Terminal test for that
manual segment; no next-step probe. \\
\bottomrule
\end{tabular}
\caption{Concrete MyFixIt language entries as PSR tests.  The manual text is
converted into action labels, observed part state, image handles, and a
next-step prediction target.}
\label{tab:myfixit-psr-tests}
\end{table}

\begin{table}[t]
\centering
\small
\begin{tabular}{lll}
\toprule
Overlap & Status & Scylla-facing interpretation \\
\midrule
Tools--parts & Glues & Tool labels and part mentions form typed repair actions. \\
Verbs--parts & Glues & Removal verbs provide action-to-object edges. \\
Steps--future tests & Glues & Ordered steps supply next-instruction predictive tests. \\
Images--observations & Obstructs & Image handles exist, but visual grounding is not yet admitted. \\
\bottomrule
\end{tabular}
\caption{MyFixIt gluing audit.  The blocked image overlap is a useful example
of \textsc{Odyssey}'s design rule: missing grounding should remain visible rather than
being averaged into a confidence score.}
\label{tab:myfixit-gluing}
\end{table}

The MyFixIt artifact also records an annotation-tool contract.  The neighboring
Flask annotator is treated as a provenance and human-loop surface, with MongoDB
configuration, processed-table locations, entry point, and runtime guardrails
preserved as foundry metadata.  This matters because the next promotion step is
not merely a better ranker; it is reviewed repair semantics with stable image
and annotation provenance.

\subsection{MyFixIt Retrieval Experiments}
\label{sec:myfixit-results}

The evaluation harness restricts MyFixIt to a declared query/document object
surface.  Queries ask for repair steps by action, affected part, tool, or
future-step relation; documents are manual-step and action-observation objects.
This is a deliberately local task: the claim being tested is not that \textsc{Odyssey}
has solved all repair-manual retrieval, but that a procedural PSR/sheaf
representation improves ranking inside a typed repair-manual restriction while
preserving its failure slices.

We compare two ranking surfaces.  The token baseline ranks candidates by
standard token-style overlap over the textual fields.  The PSR
action-observation representation ranks over structured repair state: manual
identity, step order, action labels, tools, affected parts, and future-test
features.  \cref{tab:myfixit-retrieval} reports three profiles emitted by the
evaluation harness.  The current compact profile uses \(72\) queries and \(360\)
candidate documents.  The broader profile uses \(500\) queries and \(1{,}600\)
documents with manual-level separation and duplicate-guide checks.  The strict
profile uses \(1{,}000\) queries and \(3{,}200\) documents with manual-held-out
evaluation, future-step leakage audit, image-provenance quarantine, and
cross-category transfer holdout.

\begin{table}[t]
\centering
\small
\begin{tabular}{llccc}
\toprule
Profile & Metric & PSR action-observation & Token baseline & Delta \\
\midrule
Current & nDCG@10 & 0.3832 & 0.1354 & +0.2478 \\
Current & Recall@10 & 0.6389 & 0.2917 & +0.3472 \\
Current & MRR & 0.3318 & 0.1115 & +0.2203 \\
\midrule
Broader & nDCG@10 & 0.3082 & 0.0622 & +0.2460 \\
Broader & Recall@10 & 0.6260 & 0.1320 & +0.4940 \\
Broader & MRR & 0.2368 & 0.0566 & +0.1802 \\
\midrule
Strict & nDCG@10 & 0.3195 & 0.0469 & +0.2726 \\
Strict & Recall@10 & 0.6130 & 0.1000 & +0.5130 \\
Strict & MRR & 0.2491 & 0.0426 & +0.2065 \\
\bottomrule
\end{tabular}
\caption{MyFixIt retrieval results across three evaluation-harness profiles.
The PSR action-observation representation improves nDCG@10, Recall@10, and MRR
on the current compact, broader, and strict restrictions.}
\label{tab:myfixit-retrieval}
\end{table}

The pattern is stable across scale.  The current compact profile shows a \(+0.2478\)
nDCG@10 lift; the broader profile shows \(+0.2460\); and the strict profile
shows \(+0.2726\).  Recall@10 improves more sharply as the candidate pool grows:
the strict profile has \(0.6130\) recall for the PSR representation versus
\(0.1000\) for the token baseline.  This is the expected advantage of a
procedural foundry.  When many steps share surface words such as ``remove'',
``screw'', or ``spudger'', the local action-observation state narrows the
candidate set by manual identity, part state, tool use, and future-step tests.
At the same time, the system does not erase failures.  The failure-slice report
records cases where token ranking still beats PSR ranking, especially when the
query uses an alias not yet canonicalized into the action vocabulary.

\begin{table}[t]
\centering
\small
\begin{tabular}{p{0.30\linewidth}p{0.15\linewidth}p{0.43\linewidth}}
\toprule
Failure slice & Count & Interpretation \\
\midrule
PSR under token on text alias & 6 & Textual aliases can beat structured state
when action or part names are not canonicalized. \\
Token under PSR on action state & 6 & The structured representation helps when
the query really is about action, tool, part, and state. \\
Image grounding gap & 6 & Relevant steps have image handles, but the current
run uses only image-count provenance. \\
Wrong manual at top 1 & 6 & Some top-ranked candidates come from the wrong
manual despite matching local action text. \\
Future-step leakage guard & 0 & The current generated-query slice did not
leak future-step labels into query text. \\
\bottomrule
\end{tabular}
\caption{Failure slices preserved by the MyFixIt evaluation harness.  \textsc{Odyssey}
treats these slices as promotion gates and future work items, not as incidental
error analysis.}
\label{tab:myfixit-failure-slices}
\end{table}

The main empirical lesson is that procedural foundries need typed state, not
only text similarity.  Repair instructions have actions, objects, tools, order,
visual references, and future tests.  A token representation can still be
strong on lexical aliases, but it does not by itself know which part is being
removed, which tool mediates the action, or which next state should become a
predictive test.  The MyFixIt foundry is therefore a useful first domain for
\textsc{Odyssey}: it is small enough to audit, structured enough to reward PSR-style
state, and incomplete enough to exercise the obstruction machinery.

\subsection{IKEA ASM Multimodal Assembly Foundry}
\label{sec:ikea-asm-foundry}

The IKEA ASM dataset \citep{ben2021ikeaasm} gives \textsc{Odyssey} a second procedural
foundry, complementary to MyFixIt.  MyFixIt begins with manuals and asks whether
procedural text can be lifted into action-observation state.  IKEA ASM begins
with embodied assembly episodes: multi-view RGB and depth video, atomic action
labels, furniture-part segmentation, part tracking, human pose, and camera
calibration.  The natural \textsc{Odyssey} construction is therefore not a generic video
benchmark, but a restriction of the generic procedural PSR and
perception-action evaluation foundries to furniture assembly.

The cover has five local sections.  The episode section records furniture item,
environment, view, time, and train/test split.  The action section records
per-frame and clip-level assembly labels.  The object section records furniture
parts, instance masks, and tracking identities.  The pose section records 2D
human joints and pseudo-ground-truth 3D joints.  The calibration section records
camera parameters and view geometry.  Gluing tension appears whenever these
sections disagree: an action label may imply a part manipulation that is absent
from the object track; a pose trajectory may be plausible in one camera view
but inconsistent after triangulation; a part identity may persist through SORT
tracking but fail across occlusion; an action-recognition improvement may not
transport from top-view RGB to multi-view RGB+pose.

\begin{table}[t]
\centering
\small
\begin{tabular}{p{0.24\linewidth}p{0.25\linewidth}p{0.38\linewidth}}
\toprule
IKEA ASM section & Local artifact & Gluing question \\
\midrule
Episode/video & Multi-view RGB, depth, frame extraction, split files. & Do
all modalities name the same assembly episode, frame index, view, and split? \\
Actions & Frame-, clip-, and pose-based action-recognition baselines. & Does
the action state agree with the manipulated part and the future assembly step? \\
Objects/parts & Instance segmentation and SORT-style part tracks. & Do part
identities persist across frames and agree with action labels under occlusion? \\
Pose & 2D joints, pseudo-3D joints, OpenPose/MaskRCNN/VP3D/VIBE baselines. &
Does body motion explain the action without contradicting camera geometry? \\
Calibration & Intrinsics, extrinsics, triangulation routines. & Can local
view evidence be transported to a shared geometric assembly state? \\
\bottomrule
\end{tabular}
\caption{IKEA ASM as a multimodal procedural assembly foundry.  The current
local repository contributes the code and benchmark contracts; full promotion
requires downloaded video, annotations, pretrained models, and replay logs.}
\label{tab:ikea-asm-cover}
\end{table}

The local checkout also supplies useful benchmark anchors.  The action
recognition README reports top-1 accuracy of 60.40 for P3D and 57.58 for I3D on
single-view clip baselines, while multi-view RGB reaches 63.24 and
multi-view RGB+pose with HCN reaches 64.25.  The pose README reports that
fine-tuned MaskRCNN improves 2D test PCK to 64.3, while the 3D test baselines
remain substantially harder, with VP3D obtaining the best reported PCK of 47
under Procrustes alignment.  We do not claim a new IKEA ASM result here.
Instead, \textsc{Odyssey} turns these baselines into a foundry replication target: a
future experiment should ask whether a PSR state that glues action, part, pose,
and camera sections improves action recognition, step prediction, or part-state
retrieval relative to the published single-task baselines.

\subsection{Amazon Reviews 2023 and BLaIR-Bench Replication}
\label{sec:amazon-blair-replication}

Amazon Reviews 2023 is the natural large-scale next restriction for \textsc{Odyssey},
with BLaIR providing the released language-item modeling and benchmark
interface over that corpus \citep{hou2024bridging}.  The current foundry
already separates five local sections: corpus manifest, recommendation
benchmarks, language-item models, product-search tasks, and reproduction
contract.  Its gluing audit reports four glued overlaps:
manifest/benchmarks, benchmarks/models, models/search, and
search/reproduction.  The source sheaf also records that the released dataset
has an approximately \(750\)GB footprint, so the correct paper claim is not
that \textsc{Odyssey} has rerun the entire benchmark locally.  The correct claim is that
\textsc{Odyssey} has converted the repository into an executable foundry contract with
typed sections, replay obligations, split-leakage checks, and BLaIR-Bench
entry points.

BLaIR-Bench is useful because it has the right contrast class.  It evaluates
semantic item encoders on sequential recommendation, collaborative filtering,
and product search.  \textsc{Odyssey} can reuse those tasks, but it asks an additional
sheaf question: when does a benchmark result transfer from one local section to
another?  A good text embedding on a general product-search query does not
automatically glue to a sequential-recommendation claim, and a recommendation
score on one product category should not automatically promote a corpus-wide
representation claim.  The Amazon Reviews foundry turns these into explicit
overlap tests rather than leaderboard prose.

\begin{table}[t]
\centering
\small
\begin{tabular}{p{0.21\linewidth}p{0.32\linewidth}p{0.34\linewidth}}
\toprule
Replication surface & Local artifact status & \textsc{Odyssey} promotion gate \\
\midrule
BLaIR-Bench checkout & Local repository present at
\texttt{../BLaIR-Bench}; entry points include \texttt{pipeline.py},
\texttt{seq\_rec}, \texttt{cf}, and \texttt{prod\_search}. & Promote only
interface and replay-contract claims until datasets, model checkpoints, cache
paths, and result files are attached. \\
Product search interface & BLaIR-Bench exposes \texttt{testProdSearch} and
\texttt{testApi} execution surfaces, but they still require external dataset/model
assets. & Treat as a near-term partial replication, not as a full Amazon-C4 or
Reddit-Movie result. \\
Amazon-C4 / Reddit-Movie & Product-search scripts and task definitions are
present; full evaluation requires downloaded items, queries, and embeddings. &
Compare BLaIR embedding retrieval with \textsc{Odyssey} action-observation query
factorization; promote only dataset-local NDCG@100 claims. \\
Sequential recommendation / CF & BLaIR-Bench defines UniSRec-style and
AlphaRec-style runners over Amazon and non-Amazon datasets. & Keep category,
split, encoder, and task lanes separate; block global encoder claims when
results do not glue across tasks. \\
\bottomrule
\end{tabular}
\caption{BLaIR-Bench replication surface inside the Amazon Reviews 2023
foundry.  \textsc{Odyssey} currently admits the local checkout as an artifact-backed
replication contract, with task entry points, replay obligations, and promotion
gates separated for product search, Amazon-C4/Reddit-Movie retrieval, and
sequential-recommendation or collaborative-filtering claims.}
\label{tab:blair-replication}
\end{table}

This gives the paper a clean experimental progression.  MyFixIt provides the
small, fully inspectable procedural result where PSR sheaf state already beats
a token baseline.  Amazon Reviews 2023 provides the scale target.  BLaIR-Bench
provides external task definitions and baselines.  The \textsc{Odyssey} contribution is
to connect them without flattening them: each result is admitted through a
specific foundry restriction, and non-transfer across products, tasks,
encoders, or reproduction surfaces remains visible as gluing tension.

\section{Evaluation}
\label{sec:odyssey-evaluation}

\textsc{Odyssey} should be evaluated at two levels.  The first is architectural: does a
human request become a replayable foundry artifact with a clear brief,
workflow, sheaf plan, world model, gluing audit, promotion gate, and refresh
contract?  The second is task-level: once a foundry exists, does its local
representation improve a concrete task while preserving failure slices and
obstructions?

\begin{table}[t]
\centering
\small
\begin{tabular}{p{0.24\linewidth}p{0.62\linewidth}}
\toprule
Axis & Question \\
\midrule
Artifact completeness & Are Scylla, Homer, Athena, Prometheus, Toulmin, and
ingestion artifacts present and mutually referential? \\
Cover quality & Do local sections match meaningful domain contexts rather than
arbitrary chunks? \\
Restriction validity & Are overlap checks tied to shared identifiers,
provenance, or typed translation maps? \\
Gluing usefulness & Do gluing failures correspond to actionable missing
evidence, regime mismatch, or unsupported transport? \\
Promotion discipline & Are candidate states promoted, quarantined, or blocked
by explicit gates rather than hidden confidence thresholds? \\
Model transport & Can pre-built Prometheus models be imported as candidates,
and can \textsc{Odyssey} foundries be exported for Prometheus sheaf inspection without
losing typed gluing information? \\
Task improvement & Does the foundry representation improve an admitted task
such as retrieval, triage, explanation, or decision support? \\
Argument discipline & Does a local neural reasoner preserve the grounded
claim, cite the right grounds, and keep source-coherence warnings visible in
the warrant, qualifier, or rebuttal? \\
Failure-slice preservation & Are negative cases kept as durable artifacts for
debugging and future experiments? \\
\bottomrule
\end{tabular}
\caption{\textsc{Odyssey} evaluation combines foundry-contract checks with task-level
measurements.}
\label{tab:odyssey-evaluation}
\end{table}

The current artifacts already support a first set of foundry-level gluing
experiments.  These are not yet the final empirical story, but they make the
evaluation target precise: perturb a local section, remove an overlap witness,
or ask for an overbroad transport claim, then check whether \textsc{Odyssey} either
glues the compatible restriction or preserves the resulting tension as an
obstruction.

The Prometheus ingestion console is the first larger-scale architectural
evaluation.  It shows that the same admission machinery applies to more than a
handful of curated examples: \(129\) Prometheus-family artifacts are surfaced to
Scylla, with \(12\) \textsc{Odyssey}-native bundles already either admitted or
quarantined and \(117\) older Prometheus v1 models preserved as reviewable
candidates.  Conversely, the DKS and MyFixIt export path verifies that an
\textsc{Odyssey} foundry can be compiled back into Prometheus so that local sections,
PSRs, restrictions, and gluing diagnostics can be inspected visually before
their tensions are promoted, quarantined, or sent back for repair.

\begin{table}[t]
\centering
\small
\begin{tabular}{p{0.19\linewidth}p{0.27\linewidth}p{0.38\linewidth}}
\toprule
Foundry & Current gluing artifact & Further validation \\
\midrule
Prometheus transport & The ingestion console exposes \(129\) Prometheus-family
cases: \(6\) admitted, \(6\) quarantined, and \(117\) legacy candidates awaiting
TICKET review; DKS and MyFixIt compile back into Prometheus-compatible sheaf
explorers. & Full TICKET review of selected legacy candidates would test
whether reverse-exported gluing annotations predict admission, quarantine, or
repair actions. \\
Amazon Reviews 2023 & Five local sections and four glued overlaps:
manifest/benchmarks, benchmarks/models, models/search, and
search/reproduction. & A scaled retrieval and recommendation evaluation can
compare token, embedding, and PSR/action-observation representations while
measuring split leakage, item-identity transport, and dependency drift. \\
Dick's Sporting Goods & Five glued overlaps and three integrated bridges
linking reviews, store experience, assortment, brand promise, and corporate
10-K/workflow evidence. & Review, assortment, and filing-evidence ablations
would test whether Scylla's recommended retail diagnosis changes only through
the affected overlap. \\
KET language modeling & Paper/code, code/runs, and dataset/metrics overlaps
glue locally; global KET-vs-Transformer promotion is blocked by missing
WikiText-2 KET rows and replay metadata. & Separate PTB, WikiText-2, and
WikiText-103 restrictions can validate whether promotion gates prevent
cross-corpus leaderboard collapse. \\
TCC 44K & Node-edge, edge-support, and support-LUT/uncertainty overlaps glue
over a \(44\)K-paper causal-claims atlas; causal direction, polarity,
controversy, and reverse-causality remain on watch. & A refreshed CSQL atlas
would test whether method, journal, \(p\)-value, and controversy guardrails
predict which aggregate causal claims remain robust, fragile, or misleading
under new support evidence. \\
MyFixIt & Tool/part, verb/part, and step/future-test overlaps glue; the
image/observation overlap remains blocked. & Cross-category repair-manual
profiles can extend the current \(72/500/1000\)-query evaluations and test
whether fetched/checksummed images reduce visual-observation tension. \\
IKEA ASM & Episode/action, action/part, part/track, pose/camera, and
view/calibration overlaps are specified by the local benchmark code, but full
promotion is blocked until the video, annotation, model, and run artifacts are
attached. & Published action, pose, segmentation, and tracking baselines can be
restricted to a shared assembly PSR to test whether glued action-part-pose
state improves step prediction or part-state retrieval. \\
Indus Script & Four overlaps glue and the controversy/hypothesis overlap
obstructs over \(419\) signs, \(1{,}548\) visual sequences, statistical
structure, hypotheses, and archaeological context. & New decipherment papers
or symbol-catalog revisions can test whether the obstruction ledger changes
without forcing a settled translation. \\
Grounded Toulmin/local LLM & Llama~3.2--3B, Qwen3-Next-80B, and Qwen3-235B
runs over the same six Democritus/Prometheus grounded claims all preserve the
same review case: a Dinosaur Extinction event with weak source alignment and
radiometric-dating claim drift. & A parallel 10-K comparison after
OCR/paraphrase normalization would test whether an over-licensing flag is
needed when a model treats review-aligned evidence as fully warranted. \\
\bottomrule
\end{tabular}
\caption{Further validation directions supported by the current foundry
artifacts.  Each row identifies a concrete test of whether compatible
restrictions are promoted while high-tension or underspecified transports
remain local.}
\label{tab:foundry-gluing-experiments}
\end{table}

The MyFixIt profiles are the first concrete task-level evaluations.  They show
a measurable lift for the PSR action-observation representation on restricted
retrieval tasks, while still recording text-alias failures, wrong-manual top-1
errors, and the image-grounding gap.  This is the kind of result \textsc{Odyssey} is
designed to produce: not just a single score, but a promoted local claim
bounded by the cover, the restriction, the split policy, and the remaining
obstructions.

\section{Limitations and Future Research}
\label{sec:odyssey-limitations}

The current implementation is still a design-stage system.  Several Scylla and
Athena steps are template-driven, and several Prometheus validation runs use
deterministic finite truth assignments rather than learned estimators.  This is
useful for stabilizing interfaces, but it limits the strength of empirical
claims.  The MyFixIt retrieval results are therefore preliminary even though
they now include \(72\)-, \(500\)-, and \(1{,}000\)-query profiles: they remain
deterministic restrictions over Mac Laptop repair-manual slices, not a full
benchmark across all repair categories.

The foundry algebra also needs stronger type checking.  Today, foundry
expressions, TICKET admission records, and FSQL slices are represented as
structured artifacts and compact interpreters.  A mature \textsc{Odyssey} should enforce
more of this algebra statically: admissible compositions, source/target
compatibility, required restriction maps, and promotion gates should be checked
before a run can become durable state.

The grounded Toulmin layer should also be read as an audit layer, not as a
proof system.  The current implementation asks local LLMs to produce visible
claim/grounds/warrant/qualifier/rebuttal tickets and then checks those tickets
against Prometheus grounding packets.  This exposes claim drift and weak
source-topic alignment, but it does not establish external truth.  Stronger
future versions should add calibrated warrant taxonomies, source-span
verification, cross-model agreement tests, and explicit over-licensing flags
when a model treats review-aligned evidence as if it fully licenses a claim.

Finally, visual and multimodal grounding remain incomplete.  MyFixIt contains
image URLs, but the current run does not fetch, checksum, or model the images.
\textsc{Odyssey} handles this correctly by blocking the image-observation overlap, but a
full procedural foundation model will need image assets, visual part
localization, and stronger human-reviewed annotation loops.

\label{sec:odyssey-roadmap}

The present system demonstrates that foundries can be constructed, admitted,
queried, and evaluated across several domains, but much of the current
implementation remains deliberately finite and inspectable.  The next research
question is how far this discipline can be pushed as foundries become larger,
less template-bound, and more automatically derived from source structure.  In
particular, the current deterministic covers should give way to learned or
semi-automated cover synthesis, while still preserving the central contract:
local sections, restriction maps, gluing diagnostics, obstruction ledgers, and
promotion gates must remain explicit artifacts rather than hidden model state.
\Cref{tab:foundry-gluing-experiments} summarizes the concrete further
validation agenda supported by the current artifacts.

A second direction is to study foundry transport.  The Prometheus ingestion
console shows that \textsc{Odyssey} can treat a large collection of pre-existing world
models as typed candidates, and the DKS/MyFixIt exports show that \textsc{Odyssey}
foundries can be inspected again inside a Prometheus sheaf workbench.  This
suggests a broader experimental program: measure when model state can move
between foundries, when it must remain local, and how obstruction records
change after new evidence or new restriction maps are introduced.

Procedural domains provide the most immediate testbed.  MyFixIt should be
extended from the current Mac Laptop restriction to cross-category repair
manuals, with action and part aliases normalized, manual identity enforced, and
image evidence fetched or checksummed before visual claims are promoted.  IKEA
ASM then tests the same idea in a multimodal setting: actions, parts, pose,
camera geometry, and object tracks should glue into a shared assembly state
only when the evidence supports that transport.  The relevant comparison is not
only against token retrieval, but also against single-task action recognition,
pose estimation, segmentation, and tracking baselines.

Finally, large non-procedural foundries such as Amazon Reviews 2023 make it
possible to ask whether the same sheaf-theoretic PSR discipline scales to
recommendation, product search, and language-item modeling.  The goal is not a
single headline score.  The goal is a reusable empirical protocol for deciding
when a foundry representation improves a task, when its local sections fail to
glue, and what additional evidence is required before a model should be
promoted.

The grander version of the same question is whether a GPT-scale pretrained
model can be \textsc{Odyssey}-ized.  In the language of
\cref{fig:ticket-lan-ran}, such a model would not be imported as a single
opaque parameter object or treated as a universal source of truth.  It would be
sliced into local behavioral, evidential, tool-use, memory, task, and
domain-restricted charts; rolled out by left-Kan admission into candidate
foundries; and then pulled back by right-Kan restriction, gluing, obstruction,
and Toulmin warrant checks before any slice became durable state.  If this
program succeeds, large foundation models would become substrates for
verifiable local truth-preserving foundries rather than replacements for them:
their scale would supply expressive material, while \textsc{Odyssey}'s
categorical machinery would decide where that material is warranted, where it
fails to glue, and where human review remains essential.

\section{Conclusion}

\textsc{Odyssey} reframes foundation-model construction as the design of
inspectable foundries: sheaf-like families of local predictive, logical, and
evidentiary models over a heterogeneous substrate.  The point is not to produce
one more flat summary or one more opaque embedding.  It is to preserve
locality, support, drift, contradiction, provenance, update obligations, and
epistemic limits while giving users and agents a navigable model of a domain.
Scylla names the user-facing contract, Homer makes it executable, Athena fixes
the representational semantics, and Prometheus instantiates the Topos World
Model and its audits.  Toulmin then turns maintained state into warranted,
qualified, rebuttable claims, while TICKET decides whether external runs,
pretrained models, atlases, or benchmark artifacts can enter durable foundry
state.

The BRIDGE/SKFM and IDC integrations sharpen the causal part of this story.
BRIDGE/SKFM supplies a latent-causal-refinement foundry: typed variables,
influence masks, Lie/Frobenius residual ratios, candidate edge masks, and
latent-obstruction ledgers that prevent residual-bearing pairs from being
promoted as ordinary global causal edges.  Infinitesimal-causality diagnostics
add a local test surface for causal variation inside the admission loop.  In
the SkillOpt experiments, those diagnostics become feedback for optimizing a
human-readable causal-claim skill rather than merely scoring a final answer.
The resulting system can therefore learn better admission policies while still
preserving the local warrants, qualifiers, rebuttals, residuals, and
obstructions that make a claim inspectable.

\section*{Code Availability}

The predecessor \textsc{Democritus} codebase is publicly available as the
\texttt{Democritus\_OpenAI} repository
\citep{mahadevanDemocritusOpenAI}.  \textsc{Odyssey} currently builds on this
released causal-extraction lineage and adds the Scylla--Homer--Athena--
Prometheus--Toulmin product layer for foundry construction, Topos World Model
instantiation, Claims Atlas navigation, persistent state, argument tickets,
FSQL/TICKET admission, and grounded counterfactual execution.  A public
\textsc{Odyssey} release is planned once the system has matured into a stable,
documented research product.

\appendix

\section{TICKET Specifications for GUI Foundries}
\label{app:ticket-specs}

The \textsc{Odyssey} GUI exposes a TICKET card for each foundry in the foundry algebra.
The concise card syntax in the GUI suppresses the categorical data.  We spell
it out here.  A candidate artifact \(x\) determines a small source context
category \(\C_x\), whose objects are source-side charts such as files, runs,
tables, benchmark slices, dashboards, model cards, extracted claim sets, or
process traces.  The target foundry \(Y\) determines a target site
\(\C_Y\), whose objects are the local contexts that Athena has declared for
that foundry.  A TICKET card declares a functor
\[
  F_{x,Y}:\C_x \longrightarrow \C_Y
\]
that sends each source chart to the target context in which it is allowed to be
interpreted.  For example, a 10-K source chart may be sent to a
company-year-financial context, a review shard to a product-use context, or a
repair step trace to a procedural state/action context.

Let \(\mathsf{Data}\) denote the finite artifact category used by the
implementation: typed records, finite predicate tables, local PSR cells,
provenance links, diagnostics, and promotion metadata.  The candidate appears
as a presheaf
\[
  X:\C_x^{op}\longrightarrow \mathsf{Data},
\]
and the maintained target foundry state appears as a presheaf
\[
  M_Y:\C_Y^{op}\longrightarrow \mathsf{Data}.
\]
The notation \(\mathrm{Lan}_{F_{x,Y}}\) in the cards is shorthand for the left
Kan extension of \(X\) along the opposite functor,
\[
  \mathrm{Lan}_{F_{x,Y}}X
  :=
  \mathrm{Lan}_{F_{x,Y}^{op}}X
  :
  \C_Y^{op}\longrightarrow \mathsf{Data}.
\]
Thus \(\mathrm{Lan}_{F_{x,Y}}X\) is the least target-side candidate obtained by
transporting the source charts through the declared TICKET map.  Pointwise, it
is computed by the finite colimit over source charts mapping into a target
context:
\[
  (\mathrm{Lan}_{F_{x,Y}}X)(U)
  \cong
  \operatorname*{colim}_{(F_{x,Y}(V)\to U)} X(V).
\]
In the software this colimit is realized as typed aggregation of source
records, provenance-preserving joins, adapter outputs, and diagnostic
summaries.  The important point is that the functor \(F_{x,Y}\), not the name
of the target foundry alone, specifies what transport is permitted.

The same functor can also support a dual consistency pass, following the
Universal Decision Learner pattern of composing left and right Kan extensions
\citep{mahadevanCatAGIBook}.  Precomposition with \(F_{x,Y}\) gives a
restriction functor
\[
  F_{x,Y}^{*}:[\C_Y^{op},\mathsf{Data}]
  \longrightarrow
  [\C_x^{op},\mathsf{Data}],
\]
and \(F_{x,Y}^{*}\) has both a left and a right Kan adjoint.  TICKET uses
\(\mathrm{Lan}_{F_{x,Y}}\) as its admission direction: it freely assembles the
least target-side candidate compatible with the source charts.  A stronger
audit can then apply the right Kan direction
\[
  \mathrm{Ran}_{F_{x,Y}}F_{x,Y}^{*}A,
  \qquad A=\mathrm{Lan}_{F_{x,Y}}X,
\]
as a target-side consistency envelope for the admitted candidate.  Pointwise,
this is a finite limit over target probes back through the same declared
interface:
\[
  (\mathrm{Ran}_{F_{x,Y}}F_{x,Y}^{*}A)(U)
  \cong
  \operatorname*{lim}_{(U\to F_{x,Y}(V))}
  A(F_{x,Y}(V)).
\]
The comparison
\[
  A \longrightarrow \mathrm{Ran}_{F_{x,Y}}F_{x,Y}^{*}A
\]
is therefore a round-trip check: after the source artifact has been extended
into the target foundry, it asks whether all target-side obligations that can
be observed through the source interface are jointly satisfiable.  In the
document and evidence setting used by Toulmin, the left Kan move is the rollout
that produces a target-side document claim, while the right Kan / pullback move
is the consistency scrutiny that asks whether the claim, grounds, warrant,
qualifier, and rebuttal survive the target cover and maintained obstruction
ledger.  Thus the same \(F_{x,Y}\) carries both the left-Kan admission move and
the right-Kan Toulmin-scrutiny move: the TICKET analogue of the UDL
\(\mathrm{Lan}/\mathrm{Ran}\) loop.

After transport, TICKET checks that the transported candidate can be restricted
back to overlaps in \(\C_Y\) and compared with maintained state.  For a cover
\(\{U_i\to U\}_{i\in I}\) in \(\C_Y\), write
\[
  A=\mathrm{Lan}_{F_{x,Y}}X.
\]
A family of local candidate sections \(a_i\in A(U_i)\) and maintained sections
\(m_i\in M_Y(U_i)\) \emph{glues} over \(U\) when every declared overlap
\(U_{ij}=U_i\times_U U_j\) satisfies the target compatibility predicate:
\[
  \kappa_Y\!\left(
    \rho^A_{i,ij}(a_i),\rho^A_{j,ij}(a_j),
    \rho^M_{i,ij}(m_i),\rho^M_{j,ij}(m_j)
  \right)\in
  \{\mathrm{PLAUSIBLE},\mathrm{SUPPORTED},\top\}.
\]
Here \(\rho\) denotes restriction, and \(\kappa_Y\) is Athena's finite
truth-valued overlap test for the target foundry.  Numerically, this is the
same condition implemented by the gluing audit: shared predicates must be at
least plausible, no paired predicate may be bottom, and weighted PSR or claim
gaps must remain below the target tolerance.  When the condition holds,
\(\mathrm{glue}_Y(A,M_Y)\) returns a promoted section in the maintained
foundry.  When it fails, the partial operation is undefined as promotion and
instead returns an obstruction record naming the failed overlap, source
provenance, finite truth values, and next-observation recommendation.

With this expanded notation, the cards all share the same admission template:
\[
  \textsc{TICKET}(x_{\mathrm{candidate}})
  \;\to\;
  \mathrm{Lan}_{F_{x,Y}}(X)
  \;\to\;
  \mathrm{glue}_{Y}(\mathrm{Lan}_{F_{x,Y}}X,M_Y),
\]
where \(Y\) is the target foundry named in the table and \(F_{x,Y}\) is the
source-to-target transport functor declared by that card.  The corresponding
FSQL surface is
\[
  \texttt{FROM generic\_foundries SLICE BY foundry("x") TICKET BY target\_foundry}.
\]
Every card uses the same required checks:
\texttt{typed\_manifest}, \texttt{subobject\_classifier},
\texttt{restriction\_maps}, \texttt{gluing\_audit}, \texttt{j\_closure}, and
\texttt{promotion\_gate}.  The acceptance gate is that the typed manifest,
local predicate classifier, restriction maps, gluing audit, \(j\)-closure, and
promotion gate must agree before the candidate becomes durable foundry state.

\begin{table}[p]
\centering
\scriptsize
\setlength{\tabcolsep}{4pt}
\begin{tabular}{p{0.23\linewidth}p{0.25\linewidth}p{0.22\linewidth}p{0.20\linewidth}}
\toprule
GUI foundry & GUI class & Source family & Target foundry \(Y\) \\
\midrule
\texttt{evidence\_argument} & Representational foundry & \texttt{odyssey\_foundry\_algebra} & \texttt{evidence\_argument} \\
\texttt{storefront} & Representational foundry & \texttt{simulation\_testbed} & \texttt{operational\_decision} \\
\texttt{brand} & Representational foundry & \texttt{brand\_product\_feedback} & \texttt{market\_meaning} \\
\texttt{corporation} & Representational foundry & \texttt{ten\_k} & \texttt{institutional\_financial} \\
\texttt{document\_coherence} & Representational foundry & \texttt{odyssey\_foundry\_algebra} & \texttt{document\_coherence} \\
\texttt{input\_modality} & Representational foundry & \texttt{odyssey\_foundry\_algebra} & \texttt{input\_modality} \\
\texttt{topic} & Representational foundry & \texttt{democritus} & \texttt{scientific\_challenge} \\
\texttt{data\_collection} & Process/project foundry & \texttt{odyssey\_foundry\_algebra} & \texttt{data\_collection} \\
\texttt{prometheus\_model\_}\newline\texttt{incorporation} & Process/project foundry & \texttt{odyssey\_foundry\_algebra} & \texttt{target\_foundry} \\
\texttt{database\_atlas\_incorporation} & Process/project foundry & \texttt{odyssey\_foundry\_algebra} & \texttt{schema\_atlas} \\
\texttt{research\_program} & Process/project foundry & \texttt{odyssey\_foundry\_algebra} & \texttt{research\_program} \\
\texttt{ai\_assistant\_build} & Process/project foundry & \texttt{odyssey\_foundry\_algebra} & \texttt{assistant\_spec} \\
\texttt{evaluation\_harness} & Process/project foundry & \texttt{simulation\_testbed} & \texttt{evaluation\_harness} \\
\texttt{product\_development} & Process/project foundry & \texttt{odyssey\_foundry\_algebra} & \texttt{product\_development} \\
\texttt{result\_communication} & Process/project foundry & \texttt{odyssey\_foundry\_algebra} & \texttt{result\_communication} \\
\texttt{codebase\_evolution} & Process/project foundry & \texttt{odyssey\_foundry\_algebra} & \texttt{codebase\_evolution} \\
\texttt{democritus\_free\_algebra} & Foundry free algebra & \texttt{odyssey\_foundry\_algebra} & \texttt{democritus} \\
\bottomrule
\end{tabular}
\caption{TICKET cards for generic and process foundries listed in the \textsc{Odyssey}
GUI.  Each row instantiates the shared TICKET template with a source family and
target foundry.}
\label{tab:ticket-generic-foundries}
\end{table}

\begin{table}[p]
\centering
\scriptsize
\setlength{\tabcolsep}{4pt}
\begin{tabular}{p{0.23\linewidth}p{0.25\linewidth}p{0.22\linewidth}p{0.20\linewidth}}
\toprule
GUI foundry & GUI class & Source family & Target foundry \(Y\) \\
\midrule
\texttt{ket\_language\_modeling\_eval} & Research/evaluation specialization & \texttt{research\_experiment\_suite} & \texttt{ket\_language\_modeling\_eval} \\
\texttt{network\_economy} & Simulation specialization & \texttt{simulation\_testbed} & \texttt{network\_economy\_token\_flow} \\
\texttt{dks} & Retail specialization & \texttt{ten\_k} & \texttt{sporting\_goods\_}\newline\texttt{retailer\_model} \\
\texttt{amazon\_reviews\_2023} & Corpus benchmark specialization & \texttt{amazon\_reviews\_2023} & \texttt{review\_corpus\_benchmark} \\
\texttt{procedural\_psr} & Procedural PSR candidate family & \texttt{procedural\_psr} & \texttt{procedural\_psr} \\
\texttt{myfixit} & Procedural PSR candidate & \texttt{procedural\_psr} & \texttt{repair\_manual\_psr} \\
\texttt{ikea\_asm} & Procedural PSR candidate & \texttt{procedural\_psr} & \texttt{assembly\_video\_psr} \\
\texttt{indus\_script} & Scientific challenge specialization & \texttt{democritus} & \texttt{indus\_script\_model} \\
\texttt{tcc\_44k} & Causal-claims specialization & \texttt{democritus} & \texttt{causal\_claims\_foundry} \\
\bottomrule
\end{tabular}
\caption{TICKET cards for specialized, concrete, and candidate foundries listed
in the \textsc{Odyssey} GUI.  These rows are restrictions of generic foundries to
particular domains, datasets, corpora, or model-ingestion lanes.}
\label{tab:ticket-concrete-foundries}
\end{table}

\section{TICKET as a Monad}
\label{app:ticket-monad}

The preceding appendix describes TICKET as an admission and consistency
operator.  There is a slightly higher-level categorical view that is useful for
locating TICKET inside the Universal Decision Learner pattern, but that we do
not yet exploit fully in the implementation.  Once a TICKET card declares a
source-to-target functor
\[
  F_{x,Y}:\C_x\longrightarrow \C_Y,
\]
precomposition gives the restriction functor
\[
  F_{x,Y}^{*}:[\C_Y^{op},\mathsf{Data}]
  \longrightarrow
  [\C_x^{op},\mathsf{Data}].
\]
When the relevant finite Kan extensions exist, this restriction functor has a
right Kan adjoint
\[
  F_{x,Y}^{*}\dashv \mathrm{Ran}_{F_{x,Y}}.
\]
The adjunction induces a monad on target-side foundry candidates:
\[
  \mathsf{T}_{x,Y}
  =
  \mathrm{Ran}_{F_{x,Y}}F_{x,Y}^{*}
  :
  [\C_Y^{op},\mathsf{Data}]
  \longrightarrow
  [\C_Y^{op},\mathsf{Data}].
\]
For a candidate target presheaf \(A\), \(\mathsf{T}_{x,Y}(A)\) is its
right-Kan consistency envelope: restrict \(A\) back through the TICKET
interface, then pull forward all target-side obligations that are forced by
that restricted view.  The unit
\[
  \eta_A:A\longrightarrow \mathsf{T}_{x,Y}(A)
\]
is the canonical comparison from the candidate to its consistency envelope.
The multiplication
\[
  \mu_A:\mathsf{T}_{x,Y}\mathsf{T}_{x,Y}(A)\longrightarrow
  \mathsf{T}_{x,Y}(A)
\]
says that applying the same TICKET consistency pass twice collapses to one
normalized pass.

The left-Kan rollout still plays the constructive admission role.  A raw source
artifact \(X:\C_x^{op}\to\mathsf{Data}\) first becomes the target-side
candidate
\[
  A=\mathrm{Lan}_{F_{x,Y}}X.
\]
TICKET then tests this candidate against the monadic envelope
\[
  \mathrm{Lan}_{F_{x,Y}}X
  \xrightarrow{\eta}
  \mathsf{T}_{x,Y}(\mathrm{Lan}_{F_{x,Y}}X).
\]
Thus the implemented TICKET pipeline can be read as the composite
\[
  X \longmapsto \mathrm{Lan}_{F_{x,Y}}X
  \longmapsto \mathsf{T}_{x,Y}(\mathrm{Lan}_{F_{x,Y}}X).
\]
In words: roll out source artifacts by a left Kan extension, then apply the
right-Kan monad that records the admission, gluing, obstruction, provenance,
and repair effects visible through the declared interface.
Promotion occurs only when this monadic consistency pass is compatible with
Athena's target gluing policy and with the maintained foundry state \(M_Y\).

This view also points to computational semantics that are not yet made explicit
in \textsc{Odyssey}.  An Eilenberg--Moore algebra for
\(\mathsf{T}_{x,Y}\) is a target foundry state \(A\) equipped with a structure
map
\[
  \alpha:\mathsf{T}_{x,Y}(A)\longrightarrow A.
\]
Operationally, such an algebra is a maintained foundry that knows how to absorb
its own TICKET consistency envelope back into stable state.  A Kleisli arrow
\[
  A\longrightarrow \mathsf{T}_{x,Y}(B)
\]
is a computation that produces a candidate \(B\) together with TICKET effects:
admission obligations, restriction diagnostics, gluing constraints,
obstructions, provenance, and possible repair recommendations.  This matches
the practical shape of Odyssey workflows: Scylla intents, Homer jobs,
Prometheus bundles, Toulmin tickets, and TICKET decisions are not pure maps of
artifacts, but computations that carry admission and consistency effects.

We leave the explicit development of these Eilenberg--Moore and Kleisli
semantics to future work.  The present paper uses the monad only to clarify the
categorical status of TICKET: TICKET is not merely a gate after model
construction.  It is the monadic admission semantics induced by the
left-Kan rollout and right-Kan pullback structure of UDL.

\section{System Interface Screenshots}
\label{app:system-screenshots}

\cref{fig:system-screenshots} shows representative screens from the current
\textsc{Odyssey} prototype.  These are generated from the static HTML artifacts emitted
by the repository rather than from hand-drawn mockups.  The screenshots show
the user-facing workbench, the TICKET ingestion console, the MyFixIt procedural
PSR interface, the evaluation-harness sheaf explorer, and the grounded
Toulmin/local-LLM Scylla interface.

\begin{figure}[p]
\centering
\begin{subfigure}{0.48\linewidth}
  \centering
  \includegraphics[width=\linewidth]{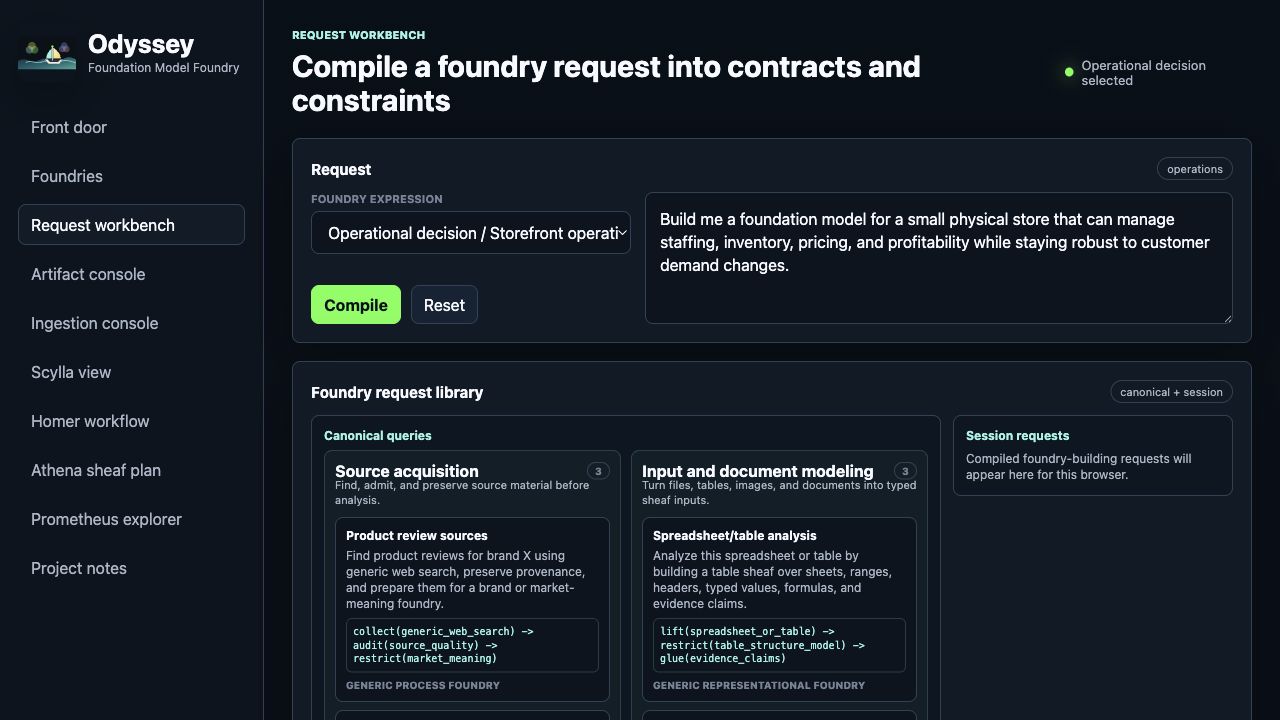}
  \caption{\textsc{Odyssey} workbench landing surface for browsing foundry families and
  generated artifacts.}
\end{subfigure}
\hfill
\begin{subfigure}{0.48\linewidth}
  \centering
  \includegraphics[width=\linewidth]{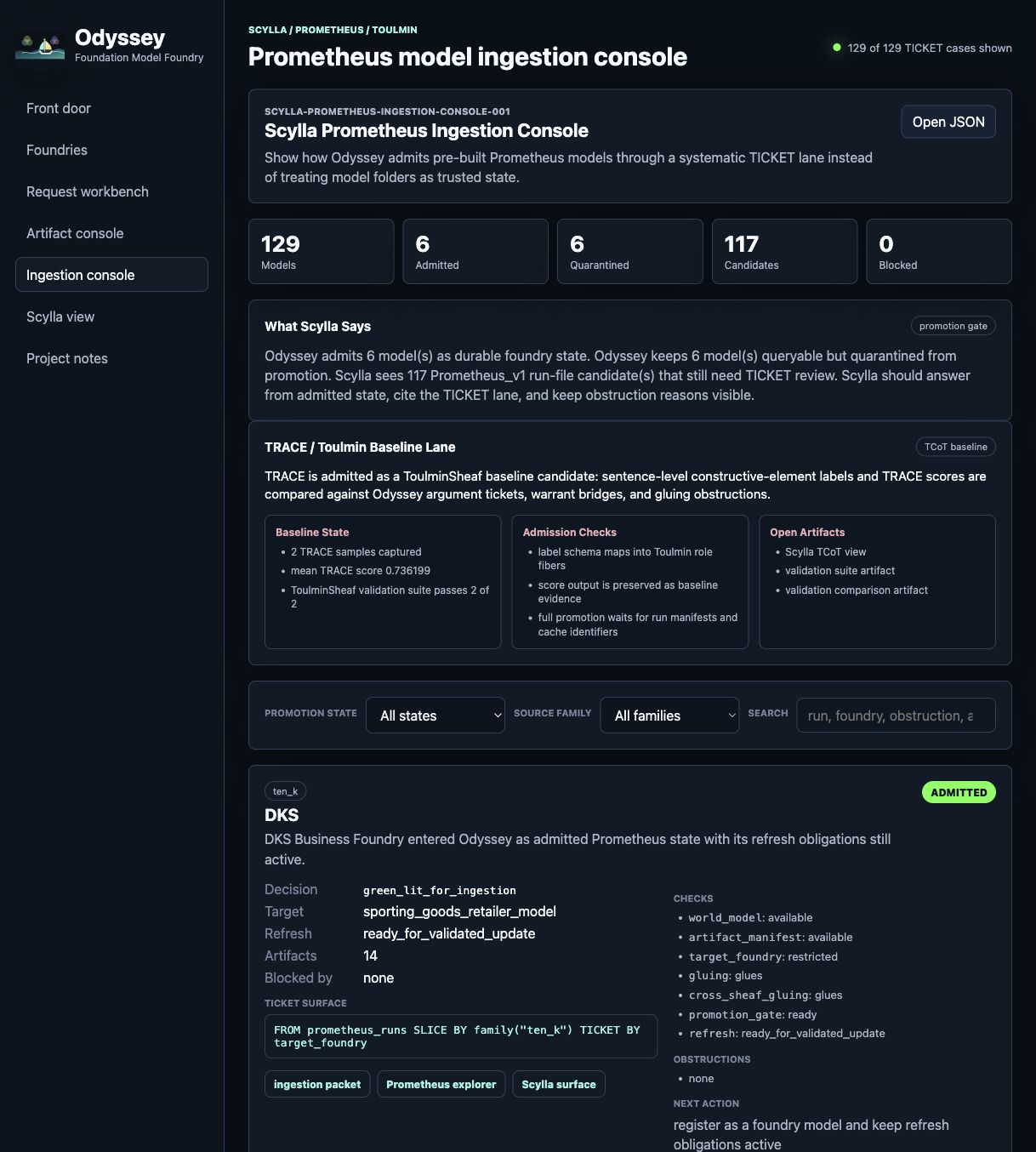}
  \caption{TICKET ingestion console for admitting Prometheus runs into durable
  \textsc{Odyssey} foundry state.}
\end{subfigure}

\vspace{0.75em}
\begin{subfigure}{0.48\linewidth}
  \centering
  \includegraphics[width=\linewidth]{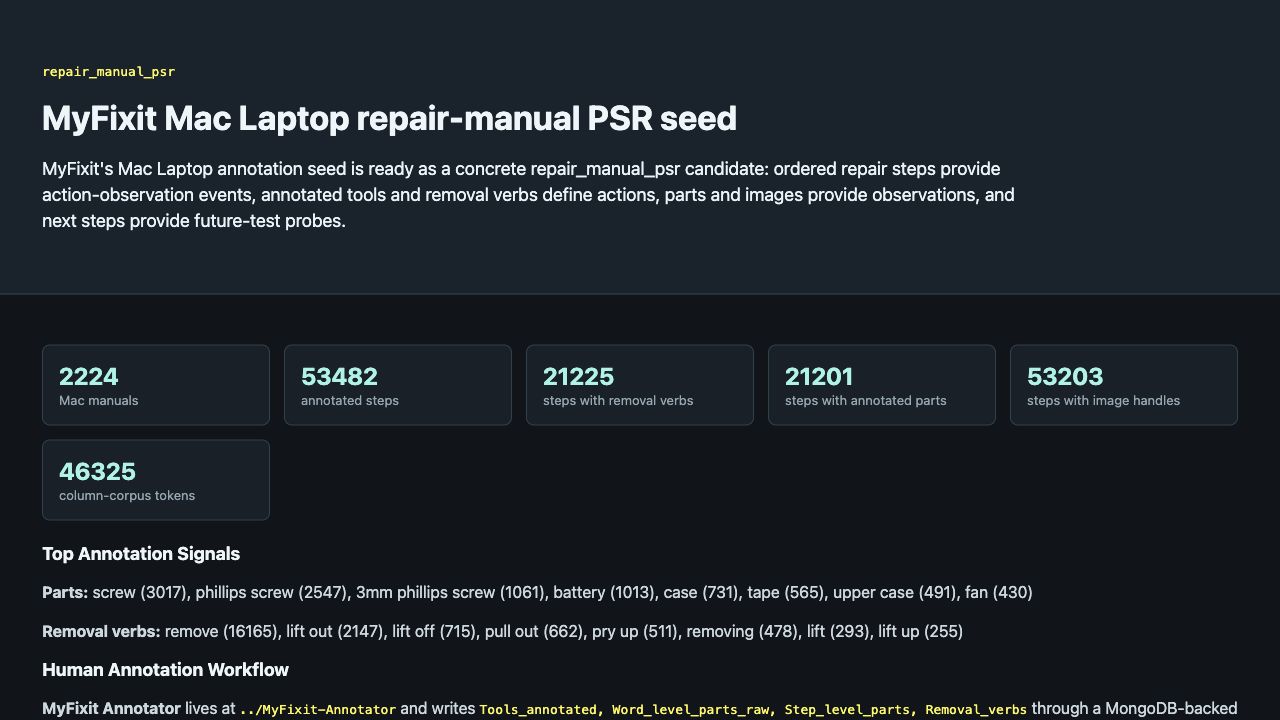}
  \caption{MyFixIt Scylla interface summarizing the repair-manual PSR seed,
  gluing status, and image-grounding obstruction.}
\end{subfigure}
\hfill
\begin{subfigure}{0.48\linewidth}
  \centering
  \includegraphics[width=\linewidth]{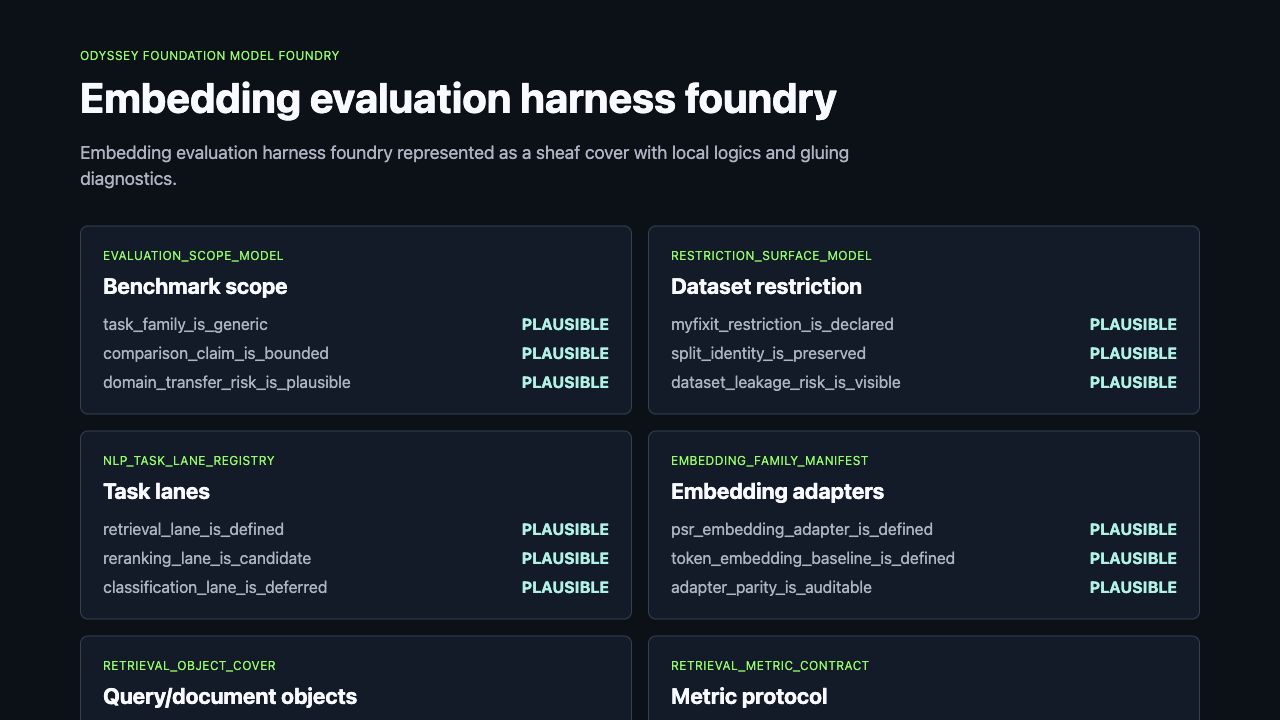}
  \caption{Evaluation-harness sheaf explorer for the MyFixIt retrieval
  restriction and failure-slice workflow.}
\end{subfigure}
\caption{Screenshots of the current \textsc{Odyssey} interface and generated foundry
artifacts.  The visual surfaces are intentionally artifact-backed: each screen
is a view over JSON, HTML, and audit files emitted by the foundry pipeline.}
\label{fig:system-screenshots}
\end{figure}

\begin{figure}[p]
\centering
\includegraphics[width=0.92\linewidth,height=0.72\textheight,keepaspectratio]{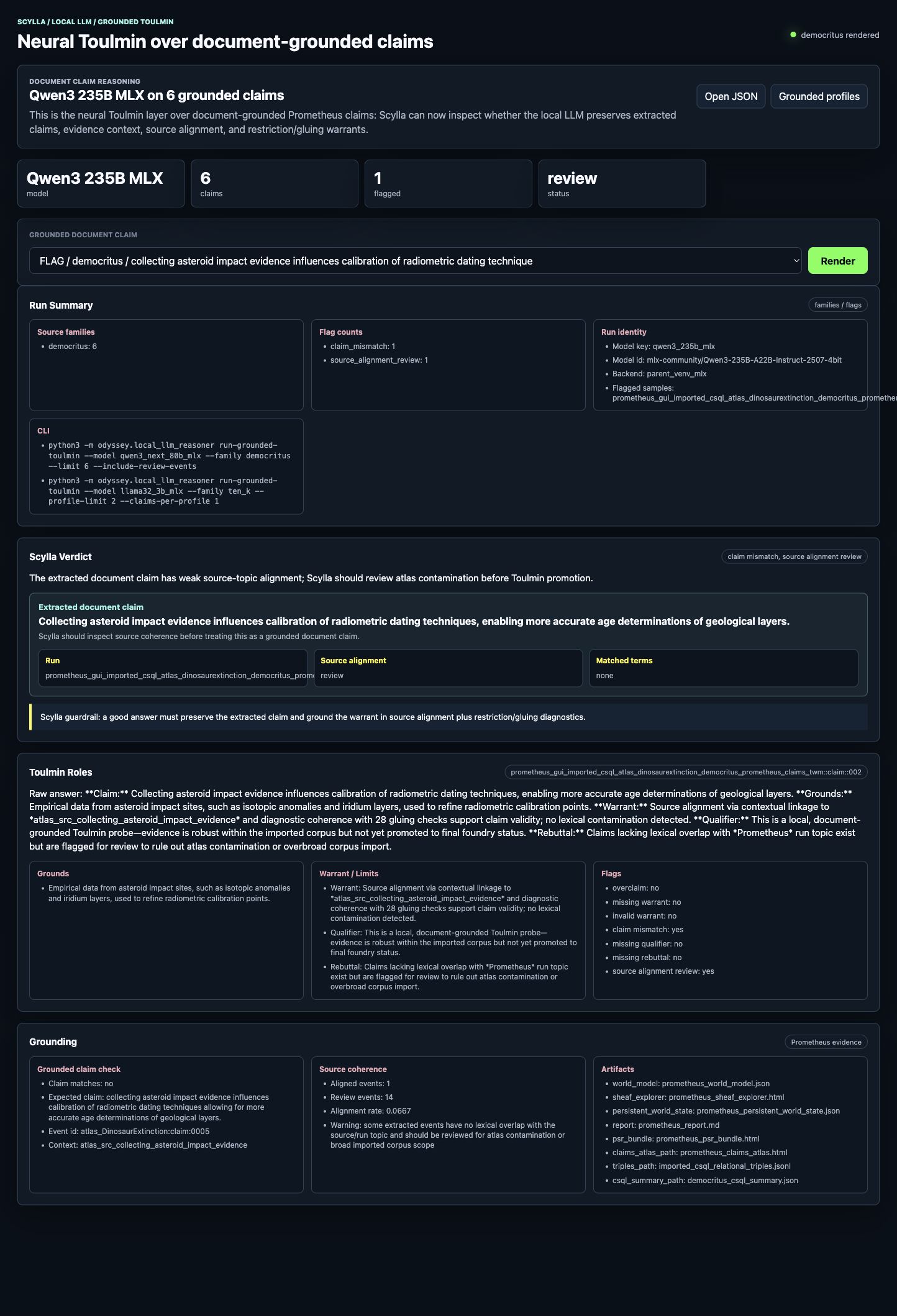}
\caption{Grounded Toulmin/local-LLM Scylla interface.  A local LLM run over
Democritus/Prometheus claims identifies a Dinosaur Extinction case in which the
model reformulates the extracted document claim while the underlying source
event remains under alignment review.  Scylla exposes both problems as
Toulmin-level flags: claim mismatch and weak source alignment.}
\label{fig:grounded-toulmin-screenshot}
\end{figure}

\section{System Genealogy and GUI Modes}
\label{app:system-genealogy}

\textsc{Prometheus} inherits part of its interface genealogy from our earlier
CLIFF chatbot and local research interface
\citep{mahadevanCLIFFCatAgi}.  CLIFF began as a Categories-for-AGI companion
system for interactive retrieval, teaching, and research workflows.  The
\textsc{Prometheus} GUI reuses several lessons from that system: a
natural-language query box, long-running local sessions, background execution,
artifact dashboards, route-specific reports, and persistent run directories.
The conceptual boundary is different.  CLIFF remains oriented toward the
Categories for AGI book, course material, and general retrieval-conditioned
chatbot workflows, whereas \textsc{Prometheus} is reserved for causal research
artifacts, local PSR construction, gluing diagnostics, persistent world state,
and Claims Atlas navigation.

The GUI is designed to accept broad natural-language research requests and route
them to specialized workflows.  In the current implementation, route families
include literature and paper-corpus synthesis, Democritus-style causal-claim
analysis, SEC and company-filing workflows, product-feedback world models,
targeted-sentiment review benchmarks, Rock--Paper--Scissors and network-economy
agent traces, and small Topos/OOM experiments.  A route may emit several
artifacts: a human-readable report, a technical dashboard, a JSON world-model
bundle, a persistent-state file, and auxiliary provenance or Claims Atlas HTML.

The GUI exposes three execution modes.  \emph{Quick} mode runs the most compact
version of a workflow and is useful for quick checks or shallow artifact
inspection.  \emph{Interactive} mode keeps the local session open while
background runs complete, letting a researcher launch follow-up queries and
inspect completed artifacts from the session list.  \emph{Deep} mode allocates
more work to acquisition, extraction, synthesis, and report generation, and is
the intended setting for the case-study style runs described in this paper.
The GUI also exposes an analysis-mode choice: \emph{standard} runs the routed
workflow in its ordinary reporting mode, while \emph{Topos World Model} attaches
the \textsc{Prometheus} layer when supported, producing local PSRs, restrictions,
gluing diagnostics, and persistent-state artifacts.

Several additional controls specialize particular routes rather than changing
the overall framework.  Democritus-style claim analysis can run with full,
lightweight, or mixture-of-experts manifold modes, optional dry-run behavior,
and optional deep-dive report generation.  Filing workflows can use dry-run
paths for debugging.  Product-feedback and persistent-state workflows can take a
parent state or state query, allowing a follow-up run to compare against an
earlier world model.  These options are engineering controls, not separate
theoretical models; they let users trade runtime, cost, and depth while keeping
the same artifact contract.

\end{document}